\documentclass[sn-mathphys-num]{sn-jnl}

\usepackage{graphicx}%
\usepackage{multirow}%
\usepackage{amsmath,amssymb,amsfonts}%
\usepackage{amsthm}%
\usepackage{mathrsfs}%
\usepackage[title]{appendix}%
\usepackage{xcolor}%
\usepackage{textcomp}%
\usepackage{manyfoot}%
\usepackage{booktabs}%
\usepackage{url}
\usepackage{algpseudocode}%
\usepackage{listings}%

\usepackage{svg}
\usepackage[ruled, vlined, linesnumbered]{algorithm2e}

\newtheorem{lemma}{Lemma}
\newtheorem{definition}{Definition}%

\begin{document}

\title[Article Title]{Towards Effective and Efficient Graph Alignment without Supervision}


\author[1]{\fnm{Songyang} \sur{Chen}}\email{songyangchen@bjtu.edu.cn}
\author[1]{\fnm{Youfang} \sur{Lin}}\email{ yflin@bjtu.edu.cn}
\author*[1]{\fnm{Yu} \sur{Liu}}\email{yul@bjtu.edu.cn}
\author[1]{\fnm{Shuai} \sur{Zheng}}\email{ shuaizheng@bjtu.edu.cn}
\author[2]{\fnm{Lei} \sur{Zou}}\email{zoulei@pku.edu.cn}


\affil*[1]{\orgdiv{Beijing Jiaotong University}, 
\city{Beijing}, \postcode{100044}, 
\country{P.R. China}}

\affil[2]{\orgdiv{Peking University}, 
\city{Beijing}, \postcode{100871}, 
\country{P.R. China}}


\abstract{Unsupervised graph alignment aims to find the node correspondence across different graphs without any anchor node pairs. 
Despite the recent efforts utilizing deep learning-based techniques, such as the embedding and optimal transport (OT)-based approaches, we observe their limitations in terms of model accuracy-efficiency tradeoff. By focusing on the exploitation of local and global graph information, we formalize them as the ``local representation, global alignment'' paradigm, and present a new ``global representation and alignment'' paradigm to resolve the mismatch between the two phases in the alignment process. 
We then propose \underline{Gl}obal representation and \underline{o}ptimal transport-\underline{b}ased \underline{Align}ment (\texttt{GlobAlign}), and its variant, \texttt{GlobAlign-E}, for better \underline{E}fficiency. 
Our methods are equipped with the global attention mechanism and a hierarchical cross-graph transport cost, able to capture long-range and implicit node dependencies beyond the local graph structure. 
Furthermore, \texttt{GlobAlign-E} successfully closes the time complexity gap between representative embedding and OT-based methods, reducing OT's cubic complexity to quadratic terms. 
Through extensive experiments, our methods demonstrate superior performance, with up to a 20\% accuracy improvement over the best competitor. Meanwhile, \texttt{GlobAlign-E} achieves the best efficiency, with an order of magnitude speedup against existing OT-based methods. }

\keywords{ Unsupervised graph alignment, Alignment paradigm,  Global interactions,  Optimal transport}



\maketitle

\section{Introduction}\label{intro}
The graph alignment problem aims to predict node correspondence between two attributed graphs based on their topological structure and node features. 
It has a wide range of applications, such as matching scholar profiles across multiple academic platforms~\cite{zhang2021balancing,tang2008arnetminer}, linking the same identity across different social networks for recommendation systems~\cite{zhang2015multiple,li2019partially,slotalign}, and identifying functionally similar proteins across species in protein-protein interaction networks~\cite{dhot,liu2017novel}. 
Since the graphs to be aligned might come from different domains, the problem is particularly challenging in the \emph{unsupervised} scenario, where no observed node correspondence are available.

Recently, it  has witnessed the blossom of deep learning-based approaches~\cite{galign,walign,gtcalign,gwl,slotalign, fusegwd,peng2023robust} for unsupervised graph alignment. 
These methods predict an alignment matrix representing the probabilities of node correspondence between the source and target graphs. 
A line of research~\cite{galign,walign,gtcalign}, namely, the \emph{embedding-based} method, follows the {``embed-then-cross-compare''} approach~\cite{slotalign}. 
They first obtain node embeddings through graph neural networks (GNNs). 
Then two nodes are deemed matched if their embeddings are sufficiently close, as determined by a specified metric such as cosine similarity~\cite{walign}. 
Another category of existing solutions~\cite{gwl,slotalign,dhot, fusegwd} treats graphs as probability distributions embedded in a specific metric space. 
By exploiting the Gromov-Wasserstein distance (GWD)~\cite{memoli2011gromov}, the graph alignment problem is reformulated as an optimal transport (OT) problem by minimizing the {total cost} of transporting distributions,  
and the key challenge lies in the specification of \emph{transport cost} function. 
We refer to them as \emph{optimal transport (OT)-based} methods. 
Both types of solutions have their advantages as well as limitations. 
The embedding-based models are simple and efficient, however, it is non-trivial to formulate 
the graph alignment problem in the unsupervised setting, leading to suboptimal performance. 
In contrast, OT-based methods share a well-defined optimization objective and achieve promising accuracy, but usually incurs excessive running time due to its complexity.

\begin{figure}[!htbp]
\centering
\includegraphics[width=0.8\textwidth]{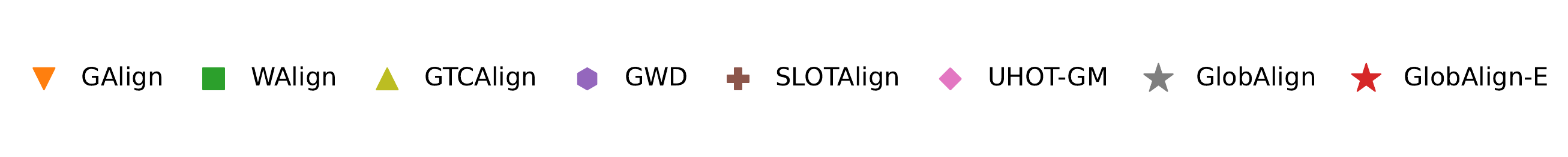} \\  
\vspace{-3mm}
\begin{tabular}{ccc} 
    \centering
    \hspace{-4mm}\includegraphics[height=33mm]{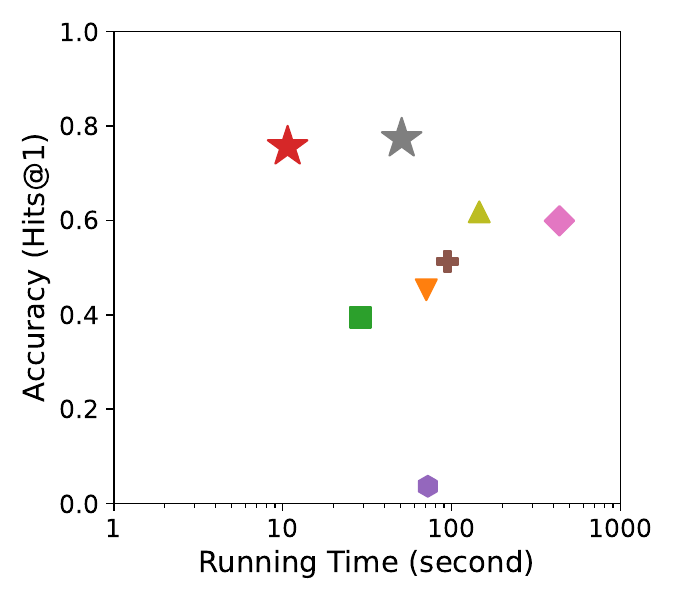} & 
    \hspace{-4mm}\includegraphics[height=33mm]{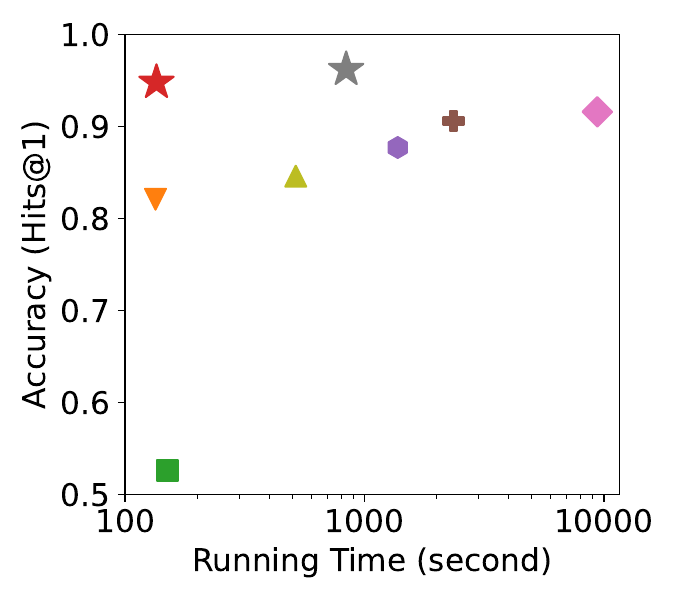} & 
    \hspace{-4mm}\includegraphics[height=33mm]{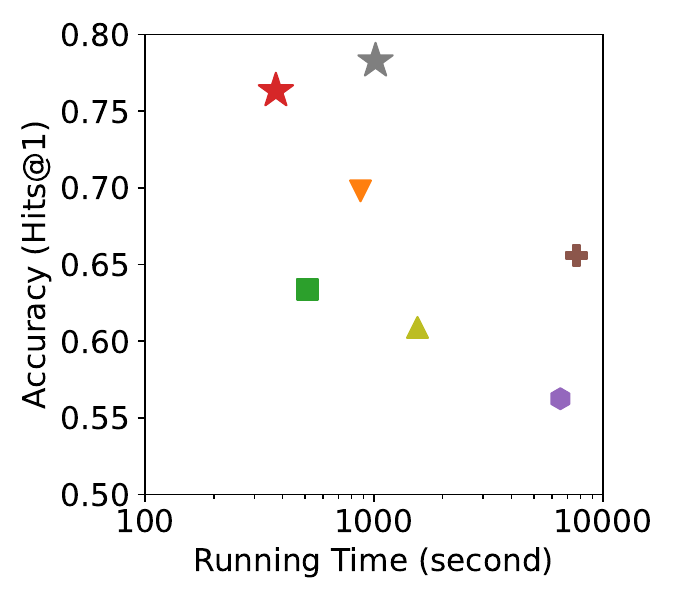}  \\
    \vspace{-4mm}
    {(a) Douban} & {(b) Allmv-Imdb} & {(c) ACM-DBLP} \\
\end{tabular}
\vspace{4mm}
\caption{Running time (s) vs. accuracy (Hits@1) on three widely-adopted datasets~\cite{slotalign,gwl,walign,gtcalign,dhot}. Existing embedding and OT-based methods show similar performance in terms of efficiency-accuracy tradeoff. Our \texttt{GlobAlign} model significantly surpasses existing solutions in accuracy, while our \texttt{GlobAlign-E} model achieves up to one order of magnitude speedup with comparable performance}
\label{fig:time}
\end{figure}



\subsection{Challenges}
In this paper, we observe two major issues overlooked by both types of studies, which hinder model performance. 

Firstly, for both categories, the representative methods~\cite{slotalign,dhot,gwl,walign,galign,gtcalign} can be summarized as a \emph{``local representation, global alignment''} paradigm. 
To be specific, the alignment procedure is decomposed into the local representation phase and the following alignment phase. 
The alignment phase is global in nature because the model predicts an alignment probability for \emph{each pair of nodes} across graphs. 
However, the local representation phase, which is crucial for node embedding and transport cost computation, relies on GNNs~\cite{gcn,gin} with local receptive fields~\cite{walign, galign, gtcalign}, local propagation~\cite{slotalign,dhot,zeng2023parrot}, or simply the adjacency information~\cite{gwl}. 
We note that the \emph{mismatch} between two phases results in model accuracy and robustness issues (Cf. Section~\ref{sec:exp}, e.g., Figure~\ref{fig:robustness-hits1}), as these models struggle to capture long-range node dependencies and those beyond the explicit graph structure. 


Secondly, despite the recent efforts (e.g.,~\cite{slotalign, gtcalign}) to enhance prediction accuracy, unfortunately, they essentially trade model efficiency for accuracy improvement as shown in our experimental study (see Figure~\ref{fig:time}). 
Take Allmv-Imdb as an example, with progressively more sophisticated transport cost design, OT-based methods (i.e., \texttt{GWD}~\cite{gwl}, \texttt{SLOTAlign}~\cite{slotalign}, and \texttt{UHOT-GM}~\cite{dhot}) indeed demonstrate enhancement in prediction accuracy. 
Nonetheless, from the viewpoint of accuracy-efficiency tradeoff, the improvement is less prominent. 
This calls for a new perspective to resolve the issue. 




\subsection{Our Contributions}
To address the aforementioned challenges, we present the following contributions.

\noindent \textbf{Formalization of Alignment Paradigm}. We first investigate existing embedding and OT-based methods by focusing on the amount of graph information exploited in the alignment process.
We formalize them as the two-phase ``local representation, global alignment'' paradigm (see Table~\ref{tab:sota comparsion}), proposing formal definitions for both phases to establish a clear understanding of model behavior. 
To the best of our knowledge, we are the first to analyze the limitations of local node interactions in existing methods for
addressing the graph alignment problem. 
Then, we theoretically analyze the mismatch between local representation and global alignment, and present a novel \emph{``global representation and alignment''} paradigm to capture long-range and implicit node dependencies.

\noindent \textbf{The \texttt{GlobAlign} Method}. To implement our proposed paradigm, we introduce an effective and efficient unsupervised graph alignment framework based on optimal transport, named \texttt{GlobAlign}. 

\emph{\underline{Gl}obal representation and \underline{o}ptimal transport-\underline{b}ased \underline{Align}ment}, 
\texttt{GlobAlign}. 
Our model leverages the self-attention mechanism~\cite{vaswani2017attention,wu2024simplifying} to derive node representations encoded with global graph information, facilitating the relationship modeling between arbitrary node pairs. 
This approach effectively addresses the challenge of capturing long-range dependencies and mitigates the issues arising from structure inconsistency~\cite{slotalign, zeng2023parrot}. 
Based on the global representation, we devise a {hierarchical cross-graph transport cost module} following the OT-based problem formation. 
It is composed of both the Wasserstein distance and Gromov-Wasserstein distance-based cost modeling which integrates global information from different perspectives. 
They interact in a complementary manner to improve model performance. 
\texttt{GlobAlign} is optimized with the well-defined OT-based objective.

\texttt{GlobAlign} \emph{ for better \underline{E}fficiency}, i.e., \texttt{GlobAlign-E}. To bridge the time complexity gap between representative embedding and OT-based methods, we propose a more scalable model variant named \texttt{GlobAlign-E} with the help of our hierarchical cost design. 
By properly exploiting global information, it retains comparable accuracy to \texttt{GlobAlign} but reduces the cubic time complexity of OT-based solutions~\cite{gwl, slotalign, dhot}, demonstrating orders of magnitude speedup. 
\texttt{GlobAlign-E} has asymptotically identical complexity compared to the embedding-based approaches~\cite{galign, gtcalign} under mild assumptions and achieves even better practical efficiency.

\noindent \textbf{Comprehensive Experimental Study}. We conduct extensive experiments, in which the contribution is two-fold. 
On the one hand, we demonstrate the limitations of existing approaches, particularly under the accuracy-efficiency tradeoff (also see Figure~\ref{fig:time}). 
On the other hand, the advantage of \texttt{GlobAlign} and \texttt{GlobAlign-E} is validated in terms of prediction accuracy and model efficiency, while the improvements are both significant. 
Besides, the robustness analysis and ablation study show the effectiveness of our model.  

To sum up, our contributions are briefed as follows:
\begin{itemize} 
\item 
\textbf{A new alignment paradigm}. 
As far as we know, we are the first to formalize graph alignment via the exploitation of (local and global) graph information and present a new ``global representation and alignment'' paradigm.

\item 
\textbf{\texttt{GlobAlign} leveraging global information}. We propose two model variants, \texttt{GlobAlign} and \texttt{GlobAlign-E}, 
using not only local graph structure but also long-range and implicit node relations. 
Furthermore, \texttt{GlobAlign-E} closes the complexity gap between embedding and OT-based approaches.


\item 
\textbf{Superior effectiveness and efficiency}.
Our models demonstrate superior performance, with up to a 20\% accuracy improvement over the best competitor. Meanwhile, \texttt{GlobAlign-E} is the fastest method in comparison, with an order of magnitude speedup against existing OT-based methods. 
\end{itemize}

\section{Preliminary and Problem} \label{sec:prelim}

\noindent \textbf{Problem Statement}.
In alignment with existing research efforts~\cite{slotalign,walign,dhot,galign,gtcalign}, we focus on the challenge of node alignment between a pair of attributed graphs in the unsupervised scenario. 
We denote an undirected and attributed graph as $\mathcal{G} = (\mathcal{V}, \mathcal{E}, \mathbf{X})$ with node set $\mathcal{V}$ of size $n$, edge set $\mathcal{E}$ represented by the adjacency matrix $\mathbf{A} \in \{0,1\}^{n \times n}$, and node features 
$\mathbf{X} \in \mathbb{R}^{n \times d}$. 
We list frequently used notations in Table~\ref{tbl:def-notation}.
 
\begin{definition} [Unsupervised Graph Alignment]
Given source graph $\mathcal{G}_s$ and target graph $\mathcal{G}_t$, we assume that $n_s = |\mathcal{V}_s|, n_t = |\mathcal{V}_t|$, and $n_s \leq n_t$ w.l.o.g. 
Without any observed node correspondences (i.e., anchors), the unsupervised graph alignment problem returns an $n_s \times n_t$-sized alignment matrix $\mathbf{T}$, where $\mathbf{T}(i,k)$ represents the probability that node $u_i\in \mathcal{G}_s$ is aligned to node $v_k\in \mathcal{G}_t$. 
\end{definition}

\noindent \textbf{Optimal Transport for Graph Alignment}. Due to its favorable properties, a line of 
studies~\cite{slotalign, dhot, gwl} employ optimal transport (OT) to formulate the graph alignment problem. 
More specifically, it supposes that two samples (i.e., the node sets), $\mathcal{V}_s=\{u_i\}_{i=1}^{n_s}$ and $\mathcal{V}_t=\{v_k\}_{k=1}^{n_t}$, are generated from the probability distributions $\boldsymbol{\mu} \in\Delta^{ n_{s} - 1}$ and $\boldsymbol{\nu} \in\Delta^{ n_{t} - 1}$ respectively, where $\Delta^{n-1}$ denotes the $(n - 1)$-Simplex. 
Optimal transport (OT) and the associated Wasserstein Distance (WD)~\cite{WD} are then employed to quantify the discrepancy between $\boldsymbol{\mu}$ and $\boldsymbol{\nu}$ given that they lie in the same space~\cite{li2023efficient}, as depicted by OT's Kantorovich formulation~\cite{kantorovich1942transfer}: 
\begin{equation}\label{eqn:WD}
    \mathrm{W}(\boldsymbol{\mu},\boldsymbol{\nu}) = \min_{\mathbf{T}\in\Pi(\boldsymbol{\mu},\boldsymbol{\nu})}\langle\mathbf{C},\mathbf{T}\rangle.
\end{equation}
Here, $\Pi(\boldsymbol{\mu}, \boldsymbol{\nu})=\left\{\mathbf{T} \in \mathbb{R}_{+}^{n_{s} \times n_{t}}: \mathbf{T} \mathbf{1}_{n_{t}}=\boldsymbol{\mu}, \mathbf{T}^{\intercal} \mathbf{1}_{n_{s}}=\boldsymbol{\nu}\right\}$ is the set of joint probability distributions. 
and it holds that $\sum_{i=1}^{n_s} \sum_{k=1}^{n_t} {\mathbf{T}(i, k)} = 1$.
For the graph alignment problem, the $(i,k)$-th entry of $\mathbf{T}$ represents the probability that node $u_i \in \mathcal{V}_s$ is aligned to $v_k \in \mathcal{V}_t$, with $\mathbf{C} \in \mathbb{R}^{n_{s}\times n_{t} }$ specifying the pairwise transport cost across two node sets.

\begin{table}[t]
\centering
\begin{small}
\caption{Table of notations.}\label{tbl:def-notation} 
\begin{tabular}{p{3cm} p{9cm}}
\toprule
\textbf{Notation} & \textbf{Description} \\
\midrule
$\mathcal{G}_s, \mathcal{G}_t$  &  The source and target graph \\ 
$\mathbf{A}_p, \mathbf{X}_p$    &  Adjacency matrix and node feature matrix ($p=s,t$) \\ 
$u_i, u_j, v_k, v_l$            &  Nodes with $u_i, u_j \in \mathcal{G}_s$ and $v_k, v_l \in \mathcal{G}_t$ \\ 
$\mathbf{D}_s, \mathbf{D}_t$    &  The relation matrices of source graph and target graph  \\  
$\mathbf{T}$    &  The alignment matrix \\ 
$\mathbf{Cost}_{gwd}, \mathbf{Cost}_{wd}$ & The cross-graph transport cost\\
$\mathbf{S}_s, \mathbf{S}_t$ & The similarity matrices of node feature\\
$\mathbf{P}_s, \mathbf{P}_t$ & The PPR matrices of source and target graph\\
$\mathbf{M}_s, \mathbf{M}_t$ & The mask matrices \\
\bottomrule
\end{tabular}
\end{small}
\end{table}

\noindent \textbf{Gromov-Wasserstein (GW) Learning}. When $\boldsymbol{\mu}$ and $\boldsymbol{\nu}$ lie in different spaces, which is common for cross-domain graph alignment, optimal transport cannot be applied directly as it is hard to define the transport cost $\mathbf{C}$. 
To overcome this limitation, the Gromov-Wasserstein Distance (GWD) is introduced~\cite{memoli2011gromov,peyre2016gromov} to measure the discrepancy between two samples by comparing their structural similarity, which is defined within each space separately by two relation matrices $\mathbf{D}_{s} \in \mathbb{R}^{n_s \times n_s}$ and $\mathbf{D}_{t} \in \mathbb{R}^{n_t \times n_t}$. 
We have
\begin{equation}\label{eqn:GWD}
    \begin{aligned}
&\mathrm{GWD}\left((\mathbf{D}_{s},\boldsymbol{\mu}),(\mathbf{D}_{t},\boldsymbol{\nu})\right) \\ &= \min_{\mathbf{T}\in\Pi(\boldsymbol{\mu}, \boldsymbol{\nu})}\sum_{i,j,k,l}\mathcal{L}\left(\mathbf{D}_{s}(u_i,u_j),\mathbf{D}_{t}(v_k, v_l)\right) \mathbf{T}(i,k) \mathbf{T}(j,l). 
\end{aligned}
\end{equation}
Let $\mathcal{L}:\mathbb{R}\times\mathbb{R}\mapsto\mathbb{R}$ be the ground cost function, e.g., the $\ell_{2}$ loss, 
we have $\mathcal{L}\left(\mathbf{D}_{s}(u_i,u_j),\mathbf{D}_{t}(v_k, v_l)\right) = {|\mathbf{D}_s(u_i,u_j) - \mathbf{D}_t(v_k, v_l)|}^2$. 
This term measures the structural similarity between $u_i,u_j \in \mathcal{G}_s$ and $v_k,v_l \in \mathcal{G}_t$. 
In other words, for likely matched node pairs $(u_i, v_k)$ and $(u_j, v_l)$ with large values of $\mathbf{T}(i,k)$ and $\mathbf{T}(j,l)$, the values of $\mathbf{D}_s(u_i,u_j)$ and $\mathbf{D}_t(v_k, v_l)$ should be close~\cite{gwl, slotalign,peyre2016gromov}.

Following~\cite{peyre2016gromov}, we reformulate Equation~\ref{eqn:GWD} as $\min_{\mathbf{T}\in\Pi(\mu,\nu)}\langle\mathcal{L}(\mathbf{D}_{s},\mathbf{D}_{t}) \odot \mathbf{T},\mathbf{T}\rangle$, where $\left( \mathcal{L}(\mathbf{D}_{s},\mathbf{D}_{t}) \odot \mathbf{T} \right)_{i,k} = \sum_{j, l} \mathcal{L}\left(\mathbf{D}_{s}(u_i, u_j), \mathbf{D}_{t}(v_k, v_l)\right) \mathbf{T}(j,l) $.
As a result, the term $\mathcal{L}(\mathbf{D}_{s},\mathbf{D}_{t}) \odot \mathbf{T} \in \mathbb{R}^{n_s \times n_t}$ can be interpreted as the transport cost (i.e., $\mathbf{C}$) in Equation~\ref{eqn:WD}~\cite{peyre2016gromov}. 
To efficiently solve the OT problem, an additional entropic regularization term~\cite{cuturi2013sinkhorn} is introduced into Equation~\ref{eqn:WD}, which transforms the problem into a strongly convex and smooth optimization problem~\cite{li2023efficient,sinkhorn1967concerning,sinkhorn}. 
For GWD, directly computation of Equation~\ref{eqn:GWD} incurs $O(n^4)$ time. 
When the ground cost $\mathcal{L}$ is decomposable~\cite{li2023efficient, peyre2016gromov}, $\mathcal{L}(\mathbf{D}_{s},\mathbf{D}_{t}) \odot \mathbf{T}$ can be calculated as $\mathbf{D}_{s}^2 \boldsymbol{\mu} \mathbf{1}^\intercal_{n_t} + \mathbf{1}_{n_s} \boldsymbol{\nu}^\intercal \mathbf{D}_{t}^{\intercal2} - 2 \mathbf{D}_{s} \mathbf{T} \mathbf{D}_{t}^\intercal$ with $O(n^3)$ time for better scalability~\cite{gwl,slotalign}.  



\begin{table}[t]
\centering

\caption{Comparison of \texttt{GlobAlign} against state of the art: alignment paradigms and time complexity. Note that all methods share a space complexity of $O(n^2)$, which is dominated by the output size.}\label{tab:sota comparsion}
\renewcommand*{\arraystretch}{1.0} 
\begin{tabular}{c c  c c c}
\toprule
{ Method } & {Alignment Paradigm}    &  { Time Complexity }    \\ 
\midrule
 \texttt{GAlign} \cite{galign} & \multirow{3}{*}{\shortstack{Local Representation, \\Global Comparison }} &   $O(  n^2d)$   \\ 
 \texttt{WAlign} \cite{walign}&  &   $O(n^2d)$    \\ 
 \texttt{GTCAlign} \cite{gtcalign} &  &   $O( n^2d)$   \\ 
\midrule
 \texttt{GWD} \cite{gwl}& \multirow{3}{*}{\shortstack{Local Propagation, \\Global Transport }}  &    $O(n^3)$    \\ 
 \texttt{SLOTAlign} \cite{slotalign}&   &     $O(n^3)$    \\ 
 \texttt{UHOT-GM} \cite{dhot}&   &       $O(n^3)$    \\ 
\midrule
 \texttt{GlobAlign} & \multirow{2}{*}{\shortstack{Global Representation, \\Global Transport }} &     $O(n^3)$    \\ 
 \texttt{GlobAlign-E}  &    &     $O(  n^2d + nm)$    \\ 
\bottomrule

\end{tabular}
            
\end{table}

\section{The Alignment Paradigm: Local vs. Global} \label{sec:analysis}
We formalize the graph alignment process of recently proposed learning-based solutions, including embedding-based~\cite{galign, walign,  gtcalign} and OT-based ones~\cite{gwl, slotalign, dhot}, into a two-phase \emph{representation-alignment} paradigm (also see Table~\ref{tab:sota comparsion}). 
We then point out that there exists a mismatch between the two steps within the current \emph{ ``local representation, global alignment"} approach, and propose a \emph{ ``global representation and transport''} solution to tackle this problem. 

\subsection{Formalizing Existing Alignment Paradigms}
Both the existing embedding (e.g.,~\cite{galign}) and OT-based (e.g.,~\cite{slotalign}) methods can be decomposed into the following two phases.

\subsubsection{Local Representation/Propagation} For existing approaches, this phase computes the node-wise representation (for embedding-based methods) or intra-graph node relation matrices (for OT-based methods) only based on the local graph information of each node. 

\begin{definition} [Local Representation] \label{def:local-rep} Let $\mathcal{N}^k(v)$ be the $k$-hop neighbors of node $v$ for $k = 0,1,\ldots, K$, with $\mathcal{N}^0(v) = v$ and $\mathcal{N}^1(v) = \mathcal{N}(v)$ (i.e., nodes adjacent to $v$). 
Let $\mathbf{R}(v) \in \mathbb{R}^r$ be an $r$-dimensional representation calculated by a (learnable) function $f$:
\begin{equation}
    \mathbf{R}(v) = f(\{ \mathcal{N}^0(v), \mathcal{N}^1(v), \ldots, \mathcal{N}^K(v) \}).
\end{equation}
We say $\mathbf{R}(v)$ is a local representation of $v$ since it contains the structural/feature information of nodes within $K$-hops from $v$, given that $K$ is smaller than the graph diameter (e.g., $K = 2$ or $3$). 
\end{definition}

Note that the above definition adapts to both categories of learning-based alignment methods. 
For embedding-based ones, e.g.,~\cite{walign}, we set $\mathbf{R}(v) = \mathbf{Z}(v) \in \mathbb{R}^d$, which can be computed using graph neural networks (GNNs). The function $f$ is specified as $\mathrm{GNN}_{\Theta}$ accordingly in which $\Theta$ stands for the learnable parameters of GNN. 
To avoid over-smoothing~\cite{over-smoothing} and over-squashing problems~\cite{over-squashing}, long-range interactions with large $K$ are prohibited. 
As for existing OT-based solutions~\cite{gwl, slotalign, dhot}, we have $\mathbf{R}(v) = \mathbf{D}(v,\cdot) \in \mathbb{R}^n$ where $\mathbf{D}(v,\cdot)$ denotes the $v$-th row of the relation matrix $\mathbf{D}$. 
Particularly, \texttt{GWD}~\cite{gwl} directly sets $\mathbf{D}(v,\cdot) = \mathbf{A}(v,\cdot)$~\cite{slotalign}, while~\cite{slotalign,dhot} compute $\mathbf{D}(v,\cdot) = \langle \mathbf{Z}'(v), \mathbf{Z}'(\cdot) \rangle$ by employing the inner product. 
Here, $\mathbf{Z}'(v)$ is calculated in a way similar to the embedding-based approaches but with a non-learnable function $g$, which is commonly implemented as \emph{local feature propagation}, for example, $\mathbf{Z}'(v) = g(\{ \mathcal{N}^0(v), \mathcal{N}^1(v), \ldots, \mathcal{N}^K(v) \}) = \sum_{k=0}^K \mathbf{A}^k(v, \cdot) \cdot \mathbf{X}$. 

\subsubsection{Global Comparison/Transport} In this phase, each node in the source graph is compared to every node in the target graph according to the node representation or the transport cost, resulting in a global comparison/transport procedure. 

\begin{definition} [Global Alignment] \label{def:global-align} Given $\mathbf{R}_s \in \mathbb{R}^{n_s \times r}$ and $\mathbf{R}_t \in \mathbb{R}^{n_t \times r}$, the global alignment process is formulated as 
\begin{equation}
    \mathbf{T}(i, k) = h(\mathbf{R}_s(u_i), \mathbf{R}_t(v_k)),
\end{equation}
where $h: \mathbb{R}^r \times \mathbb{R}^r \mapsto \mathbb{R}$ is a learnable function (or procedure). 
This phase is global, because for each $u_i \in \mathcal{G}_s$, we compute $\mathbf{T}(i, k)$ for each $v_k \in \mathcal{G}_t$ to find the one with the largest alignment probability.
\end{definition}

Existing embedding-based methods either adopt simple functions (e.g., $h(\mathbf{R}_s(u_i), \mathbf{R}_t(v_k)) = \mathbf{R}_s(u_i)^\intercal \mathbf{R}_t(v_k)$)~\cite{galign, gtcalign} or sophisticated learnable procedures such as generative adversarial network (GAN)~\cite{walign} for the global comparison of node embeddings, which can be viewed as predicting $\mathbf{S} \in \mathbb{R}^{n_s \times n_t}$, the unnormalized form of $\mathbf{T}$. 
For OT-based models, this global transport step is implemented by GW learning, given that the GWD-based transport cost $\mathbf{Cost}_{gwd}(u_i, v_k)$ is computed from $\mathbf{R}_s(u_i)$ and $\mathbf{R}_t(v_k)$.

\subsection{Mismatch between Local Representation and Global Alignment}
We claim that there is a mismatch between the local representation phase and the global alignment phase since the former encodes only local graph information which is later consumed by the latter phase. 
Subsequently, the global alignment process cannot achieve its full potential. 
We have the following lemmas.

\begin{figure}[t]
\centering
\begin{tabular}{c} 
    \centering
    \hspace{-4mm}\includegraphics[ height=20mm]{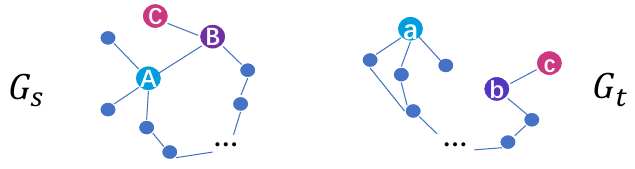}  
   
\end{tabular}
  \caption{ A toy example to show the limitation of local representation for graph alignment }
\label{fig:motivation}
\end{figure}

\begin{lemma} \label{lem:emb}
Local representation (e.g., using GNNs) is insufficient for embedding-based alignment.
\end{lemma}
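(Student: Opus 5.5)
The plan is to prove Lemma~\ref{lem:emb} by a counterexample. We exhibit a pair of graphs $\mathcal{G}_s,\mathcal{G}_t$ and two distinct nodes whose local representations in the sense of Definition~\ref{def:local-rep} coincide. Any alignment of the form in Definition~\ref{def:global-align} then cannot tell them apart, even though the ground-truth correspondence does.

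\textbf{Construction.} Take a structure like the toy example in Figure~\ref{fig:motivation}: a long path (or a cycle with a distinguishing ``tail'') of length well beyond $2K+1$, with identical node features $\mathbf{X}(v)=\mathbf{x}$ for all $v$. Let $\mathcal{G}_t$ be an isomorphic copy of $\mathcal{G}_s$, so the true alignment is the isomorphism $\pi$. Choose two target nodes $v_k \neq v_l$ such that:
\begin{itemize}
\item their $K$-hop neighborhoods $\{\mathcal{N}^0,\ldots,\mathcal{N}^K\}$ are isomorphic as rooted, feature-labelled subgraphs (for example, two interior nodes of the path at distance greater than $K$ from both ends);
\item they are globally distinguishable, for instance by their distances to the endpoint, so no automorphism of $\mathcal{G}_t$ maps $v_k$ to $v_l$.
\end{itemize}

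\textbf{Step 1 (equal representations).} Since $f$ depends only on the $K$-hop neighborhood, we get $\mathbf{R}_t(v_k)=\mathbf{R}_t(v_l)$. To make this precise for $f=\mathrm{GNN}_\Theta$, argue by induction on layers: any message-passing GNN with $K$ layers is invariant under isomorphisms of rooted depth-$K$ unfoldings. This holds for every parameter choice $\Theta$.

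\textbf{Step 2 (the conclusion).} For $u_i=\pi^{-1}(v_k)$ and any $h$, we have $\mathbf{T}(i,k)=h(\mathbf{R}_s(u_i),\mathbf{R}_t(v_k))=\mathbf{T}(i,l)$. So $u_i$ cannot be uniquely aligned to its true counterpart, and Hits@1 is at most $1/2$ on such nodes. Because the path can be made arbitrarily long, the fraction of nodes affected approaches $1$. Local representation is therefore insufficient regardless of how expressive the learnable $h$ is.

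\textbf{Main obstacle.} The subtle point is making ``insufficient'' rigorous while $h$ is allowed to be an arbitrary learnable procedure, such as a GAN in \texttt{WAlign}. If $h$ sees the whole sets $\mathbf{R}_s$ and $\mathbf{R}_t$, it could exploit ordering. I will handle this by requiring permutation equivariance: $h$ may not use node indices, otherwise the output would be meaningless in the unsupervised setting. With that requirement, swapping $v_k$ and $v_l$ leaves the input multiset of representations unchanged, which forces equal scores. I would also remark that uniform features are needed only for cleanliness; any feature assignment that is periodic along the path gives the same result.
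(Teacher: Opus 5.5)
Your proof is correct, but it reaches the lemma through a different failure mode than the paper. The paper fixes a truly matched pair $(u_i,v_k)$ whose $K$-hop neighborhoods differ because of structure inconsistency between the graphs (Figure~\ref{fig:motivation}). It assumes $\mathrm{GNN}_\Theta$ is \emph{highly} discriminative (injective, GIN-like), so the true match is embedded away from $u_i$, and some $v_{k'}$ with more similar local structure can satisfy $h(\mathbf{Z}_s(u_i),\mathbf{Z}_t(v_k)) < h(\mathbf{Z}_s(u_i),\mathbf{Z}_t(v_{k'}))$; it only asserts that this can happen. You work in the opposite regime, the \emph{limited} discriminative power of any $K$-local map. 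On noise-free isomorphic graphs, nodes that differ only beyond $K$ hops receive identical representations for every $\Theta$, so any pairwise $h$ as in Definition~\ref{def:global-align} ties them. Your route is constructive, uniform in $\Theta$ and $h$, and needs no noise. It also quantifies the damage, and more sharply than your $1/2$: all interior source nodes share a single score row, so a tie-breaking rule that looks only at the row gets at most one of them right. The paper's route instead ties the lemma to its robustness narrative: even a maximally expressive local encoder is misled once local structure is perturbed, which your example does not show. Two small remarks. First, with uniform features the paper's own global representation (Definition~\ref{def:global-rep}, attention over $\mathrm{MLP}(\mathbf{X})$) also assigns identical vectors to all nodes. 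Your instance therefore isolates locality as such but does not by itself favor the specific remedy the lemma is meant to motivate. Second, in Step 1, degree-normalized GCNs depend on the degrees of the distance-$K$ nodes, i.e., on $(K+1)$-hop information, so ``invariant under rooted depth-$K$ unfoldings'' is not quite accurate for them. Your construction survives only because you required distance strictly greater than $K$ from both ends, which keeps those degrees equal to $2$; it is worth saying so explicitly.
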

\begin{proof}
Suppose that for a matched node pair $(u_i, v_k)$, we have
    \begin{equation}
    \begin{aligned}
        \mathbf{Z}_s(u_i) &= \mathrm{GNN}_\Theta(\{ \mathcal{N}_s^0(u_i), \mathcal{N}_s^1(u_i), \ldots, \mathcal{N}_s^K(u_i) \}), \\
        \mathbf{Z}_t(v_k) &= \mathrm{GNN}_\Theta(\{ \mathcal{N}_t^0(v_k), \mathcal{N}_t^1(v_k), \ldots, \mathcal{N}_t^K(v_k) \}).
    \end{aligned}
    \end{equation}
When $\mathcal{N}_s^k(u_i)$ significantly differs from $\mathcal{N}_t^k(v_k)$ (also see Figure~\ref{fig:motivation}) and $\mathrm{GNN}_\Theta$ has sufficient discriminative power, e.g., with learnable injective functions~\cite{gin}, it is possible to have $h(\mathbf{Z}_s(u_i), \mathbf{Z}_t(v_k)) < h(\mathbf{Z}_s(u_i), \mathbf{Z}_t(v_{k'}))$ for some $v_{k'}$.
\end{proof}

In general, graphs to be aligned are broadly similar but with structure inconsistency at a fine-grained level~\cite{slotalign} (e.g., matched node pairs may display different topological relationships). For instance, consider the toy example in Figure~\ref{fig:motivation}, where nodes with the same letter but different cases are aligned. 
In particular, node $u_A$ and $u_B$ are first-order neighbors, whereas their aligned counterparts, $v_a$ and $v_b$, are at a long distance. 
According to the structural similarity (e.g., rooted subtrees~\cite{gin}), it is more proper to have $\mathbf{Z}_s(u_B)^\intercal \mathbf{Z}_t(v_a) > \mathbf{Z}_s(u_B)^\intercal \mathbf{Z}_t(v_b)$, namely, $\mathbf{S}(u_B, v_a) > \mathbf{S}(u_B, v_b)$, resulting in a false alignment.



\begin{figure}[!t]
\centering

\begin{tabular}{cc} 
    \centering
    \hspace{0mm}\includegraphics[height=30mm]{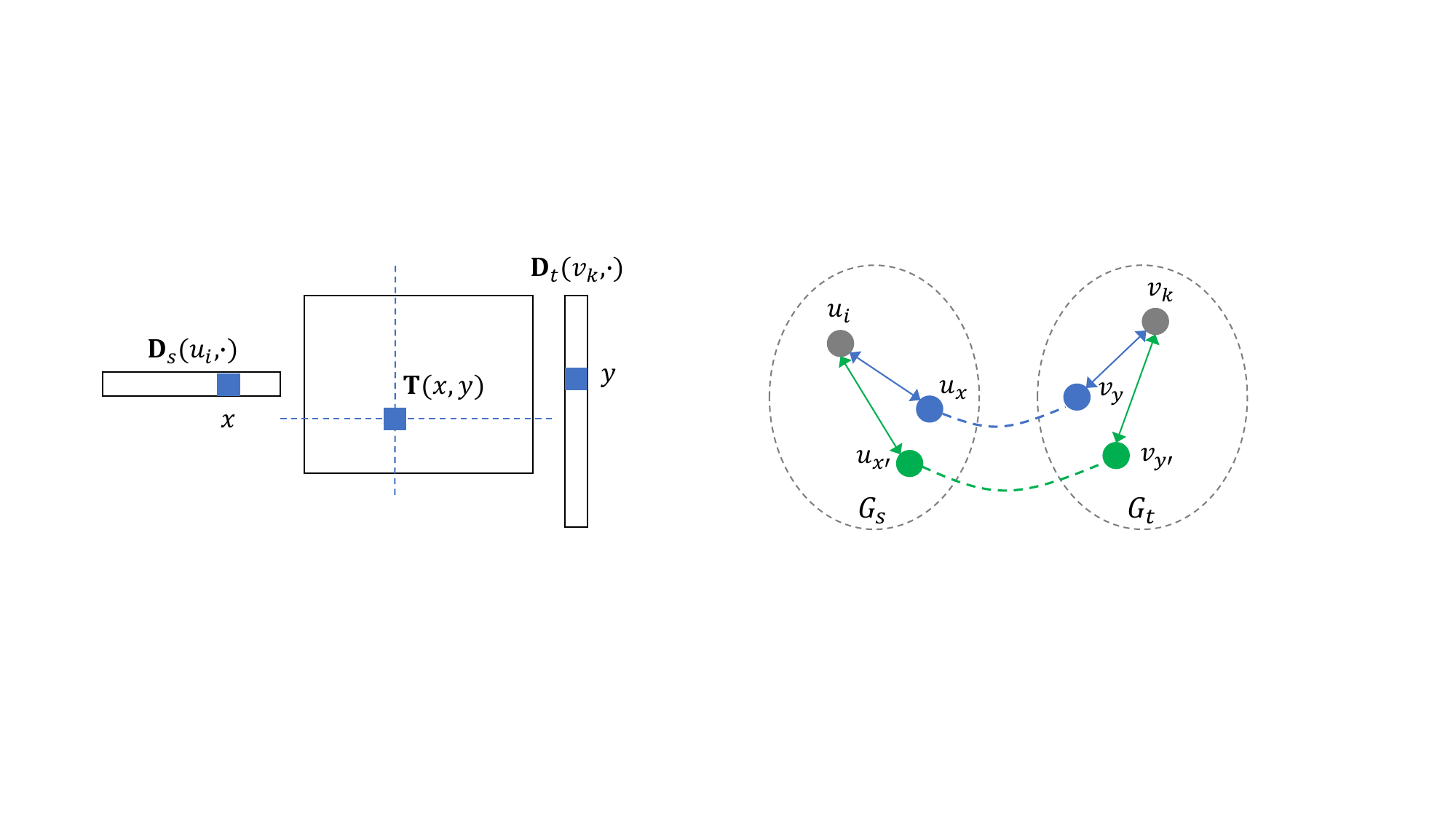} &  
\hspace{-2mm}\includegraphics[height=30mm]{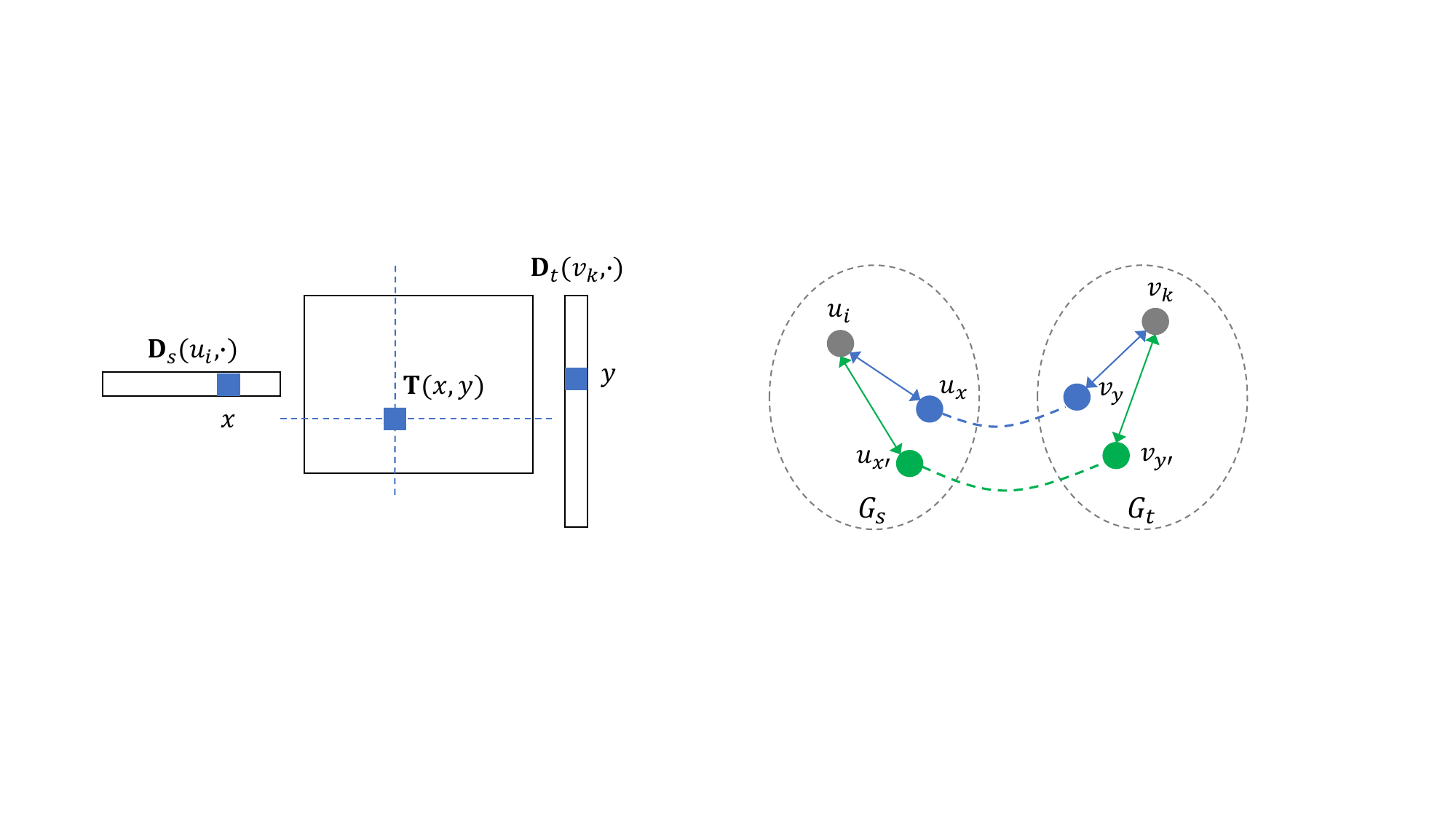}  \\
    \vspace{-3mm}
    {(a) Matrix formation} & {(b) Graph formation} \\
\end{tabular}
\vspace{3mm}
\caption{Illustration of node alignment from OT perspective}
\label{fig:example-ot}
\end{figure}

\begin{lemma} \label{lem:ot}
Computing the relation matrices with local representation (or propagation) is insufficient for OT-based alignment via GW learning.
\end{lemma}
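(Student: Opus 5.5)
We mirror the argument used for Lemma~\ref{lem:emb}, now phrased through the GWD cost rather than embedding similarity. The quantity governing GW learning is the transport cost
\begin{equation*}
\mathbf{Cost}_{gwd}(u_i,v_k)=\sum_{j,l}\mathcal{L}\left(\mathbf{D}_s(u_i,u_j),\mathbf{D}_t(v_k,v_l)\right)\mathbf{T}(j,l).
\end{equation*}
To show insufficiency, it suffices to exhibit a matched pair $(u_i,v_k)$ and an unmatched node $v_{k'}$ with $\mathbf{Cost}_{gwd}(u_i,v_k)>\mathbf{Cost}_{gwd}(u_i,v_{k'})$. Such an inequality holds even when $\mathbf{T}$ is evaluated at the ground-truth alignment. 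A gradient or Sinkhorn step then pushes mass from $(u_i,v_k)$ towards $(u_i,v_{k'})$, so the ground-truth matching is not a minimizer of the GW objective (Equation~\ref{eqn:GWD}).

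\textbf{Step 1.} Under local propagation, $\mathbf{D}_p(u,\cdot)=\langle \mathbf{Z}'(u),\mathbf{Z}'(\cdot)\rangle$ with $\mathbf{Z}'(u)=\sum_{k\le K}\mathbf{A}^k(u,\cdot)\mathbf{X}$, or $\mathbf{D}=\mathbf{A}$ as in \texttt{GWD}. We first show that $\mathbf{D}_p(u,w)$ depends only on the $K$-hop neighborhoods of $u$ and $w$. Hence for two nodes at distance greater than $2K$ with unrelated features, $\mathbf{D}_p(u,w)$ carries no information about their relation beyond the raw features; in the adjacency case it is simply $0$.

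\textbf{Step 2.} We then use the toy example of Figure~\ref{fig:motivation}. Take $u_A,u_B$ adjacent in $\mathcal{G}_s$, whose counterparts $v_a,v_b$ are far apart in $\mathcal{G}_t$. Then $\mathbf{D}_s(u_B,u_A)$ is large, while $\mathbf{D}_t(v_b,v_a)$ is near zero. Under the true plan $\mathbf{T}^\ast$, the $\ell_2$ cost $\mathbf{Cost}_{gwd}(u_B,v_b)$ therefore picks up a term of order $|\mathbf{D}_s(u_B,u_A)|^2\,\mathbf{T}^\ast(A,a)$. Now let $v_{k'}$ be a node adjacent to $v_a$ whose local structure resembles that of $u_B$. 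Its corresponding term is near zero, and the remaining terms can be kept comparable, for instance by symmetric construction or by the structural similarity already invoked in Lemma~\ref{lem:emb}. This gives $\mathbf{Cost}_{gwd}(u_B,v_b)>\mathbf{Cost}_{gwd}(u_B,v_{k'})$.

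\textbf{Step 3.} Finally, we interpret the result. Since the relation matrices are local, the GW comparison $\mathcal{L}(\mathbf{D}_s(u_i,u_j),\mathbf{D}_t(v_k,v_l))$ cannot reconcile long-range or implicit dependencies. The global transport phase therefore receives misleading costs, which completes the argument.

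The main obstacle is Step 2: controlling all the other summands so that the single inconsistent pair actually flips the inequality. We would first try a concrete small construction, such as the graphs in Figure~\ref{fig:motivation} with identical features, where the remaining terms coincide by symmetry and the costs can be computed explicitly. As in Lemma~\ref{lem:emb}, we would phrase the conclusion as an existence statement (``it is possible to have'') rather than a universal one.
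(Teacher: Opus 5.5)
Your Steps 1 and 3 are essentially the paper's entire proof. The paper only observes that $K$-hop receptive fields of nodes more than $2K$ hops apart do not intersect. With distinct one-hot features, local propagation therefore gives $\mathbf{D}_s(u_i,u_x)=\mathbf{D}_t(v_k,v_y)=0$ for all such pairs, and the paper then asserts a sub-optimal transport cost without constructing any explicit inversion. The extra layer you add in Step 2 has two concrete problems.

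\textbf{First problem: why a row inversion ``suffices''.} Your reason is invalid. Because $\mathbf{T}^\ast$ is a permutation plan, mass cannot move within row $u_i$ alone. The column marginals force any such move to close into a cycle through $(u_{i'},v_{k'})$, where $u_{i'}$ is the true partner of $v_{k'}$. The simplest such cycle is the swap, and it is a descent direction only if $\mathbf{Cost}_{gwd}(u_i,v_{k'})+\mathbf{Cost}_{gwd}(u_{i'},v_k)<\mathbf{Cost}_{gwd}(u_i,v_k)+\mathbf{Cost}_{gwd}(u_{i'},v_{k'})$.

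A natural symmetric instance violates this. Take $\mathbf{D}=\mathbf{A}$, $n_s=n_t=n$ and $\mathbf{T}^\ast(j,l)=\frac{1}{n}\mathrm{1}_{l=\pi(j)}$. Then $n\,\mathbf{Cost}_{gwd}(u_i,v_y)=|\pi(\mathcal{N}_s(u_i))\,\triangle\,\mathcal{N}_t(v_y)|$. Suppose $u_B$ and $u_{i'}$ are both leaves attached to $u_A$, $v_b$ is a leaf attached to some $v_c\neq v_a$, and $v_{k'}$ is a leaf attached to $v_a$. Row $u_B$ then shows the inversion $0<2/n$, yet the swap changes the linearized cost by exactly $0$.

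In addition, the paper's $\mathbf{T}$-update is a KL-proximal step. It preserves the support of $\mathbf{T}^{(i)}$, so it moves no mass away from an exact permutation plan. Either exhibit a genuinely negative cycle, or drop the minimizer claim. The fallback is to state the conclusion as a ranking failure of the cost matrix, which is the criterion of Lemma~\ref{lem:emb} and already matches the strength of the paper's conclusion.

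\textbf{Second problem: identical features.} The identical-features construction you propose fails for propagation-based relation matrices. If every row of $\mathbf{X}$ equals $\mathbf{x}$, then $\mathbf{Z}'(u)=w_u\mathbf{x}$, where $w_u$ counts walks of length at most $K$ starting at $u$. Hence $\mathbf{D}(u,w)=w_uw_w\|\mathbf{x}\|^2>0$ for every pair, $\mathbf{D}$ has rank one, and your premise $\mathbf{D}_t(v_b,v_a)\approx 0$ fails.

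Use distinct one-hot features, as the paper does. Then $\mathbf{Z}'(u)=\sum_{k\le K}\mathbf{A}^k(u,\cdot)$ is nonnegative and supported on the $K$-ball of $u$. So $\mathbf{D}(u,w)=0$ exactly when $u$ and $w$ are more than $2K$ hops apart. For $\mathbf{D}=\mathbf{A}$, the symmetric-difference formula above already handles your ``remaining terms'' obstacle for the row inversion itself.

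Note also that your Step 2 uses a different mechanism from the paper's special case. The paper points to lost signal: far pairs are zero in both graphs, so they cannot discriminate between candidates. You point to misleading signal: a nonzero local entry of $\mathbf{D}_s$ is confronted with a forced zero in $\mathbf{D}_t$. That relies on the cross-graph structural inconsistency of Figure~\ref{fig:motivation}.
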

\begin{proof}
Recall that GW learning assumes $\mathbf{D}_s(u_i, u_j) \approx \mathbf{D}_t(v_k, v_l)$ for two matched pairs $(u_i, v_k)$ and $(u_j, v_l)$. 
Therefore, the effectiveness of relation matrices is crucial to the alignment quality. 
As demonstrated in Figure~\ref{fig:example-ot}, predicting $\mathbf{T}(i, k)$ relates to the node-wise comparison on the whole set of nodes. 
When $u_i$ and $u_x$ (resp. $v_k$ and $v_y$) are beyond $2K$ steps (e.g., $v_a$ and $v_b$ in Figure~\ref{fig:motivation}), a $K$-hop aggregation causes no information intersection and thus generates less effective $\mathbf{D}_s(u_i, u_x)$ (resp. $\mathbf{D}_t(v_k, v_y)$). This contradicts the original purpose of GWD utilizing intra-graph node similarity. 
Consider a special case where nodes are associated with different one-hot features. 
In this case, local propagation (e.g.,~\cite{slotalign, dhot}) results in $\mathbf{D}_s(u_i, u_x) = \mathbf{D}_t(v_k, v_y) = 0$ for any $u_x$ and $v_y$ with more than $2K$ hops, and computes a sub-optimal transport cost. 
\end{proof}


\subsection{Global Representation and Alignment}
To bridge the gap between the representation and the alignment phase, we propose a \emph{``global representation and alignment''} paradigm under the OT-based problem formation. 
\begin{definition} [Global Representation] \label{def:global-rep}
    We say $\mathbf{R}(v) \in \mathbb{R}^r$ is node $v$'s representation with global information, if it is calculated as
    \begin{equation}
        \mathbf{R}(v) = f(\{\mathbf{X}(w), \forall w \in \mathcal{V}\}).
    \end{equation}
\end{definition}

We opt for effective mechanisms such as self-attention that yield representation with global information. 
For two nodes $u_i, u_j \in \mathcal{G}_s$ (or similarly, $v_k, v_l \in \mathcal{G}_t$), their representations can be regarded as 
\begin{equation}
    \mathbf{R}_s(u_i) = \sum_{x=1}^{n_s} \alpha_{i,x} \mathbf{X}_s(u_x), 
    \mathbf{R}_s(u_j) = \sum_{y=1}^{n_s} \alpha_{j,y} \mathbf{X}_s(u_y). 
\end{equation}
Note that $\alpha_{x, y}$ represents the attention weights, which can be properly learned to fulfill global interaction between $u_i$ and $u_j$. 
Our approach is detailed in Section~\ref{sec:model}.

\noindent \textbf{Remark}. We conjecture that it is non-trivial to apply the idea of global representation to embedding-based approaches. 
We have substituted GNNs with Transformers for representative embedding-based methods but this leads to inferior performance, possibly due to their lack of well-defined objective in the unsupervised setting. 
We leave this as future work.

\begin{figure*}[t]
    \centering
    \includegraphics[width=\linewidth, keepaspectratio]{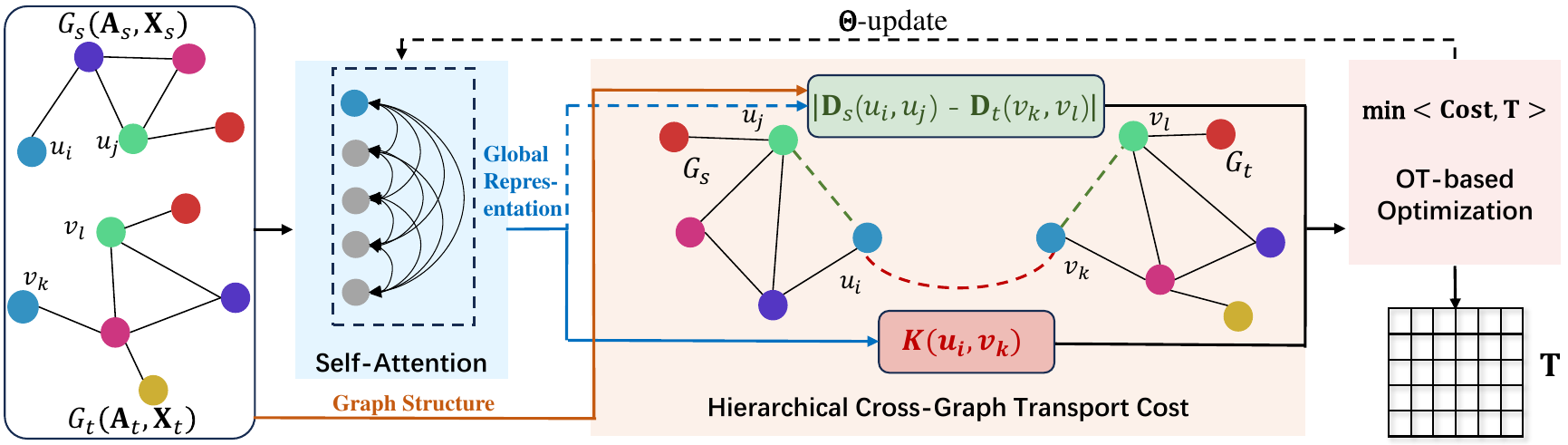} 
    \caption{The \texttt{GlobAlign} model framework}
    \label{fig:framework}
\end{figure*}

\section{Our \texttt{GlobAlign} Model} \label{sec:model}

\subsection{Model Overview}
Our proposed model, \texttt{GlobAlign}, is illustrated in Figure~\ref{fig:framework}.
Given a pair of attribute graphs, we first employ a \emph{self-attention} module (i.e., Transformers) to obtain node representation with global information. 
Next, we introduce a \emph{hierarchical cross-graph transport cost} module considering both the overall structural similarity (i.e., GWD) and node-wise similarity (i.e., WD), which is crucial for both model accuracy and efficiency. 
Finally, we minimize the total transport cost by iteratively updating both the transport cost and the alignment probabilities.

\subsection{Global Representation via Self-Attention}\label{global interactions} 
Motivated by a series of studies on Transformers~\cite{vaswani2017attention,wu2024simplifying,wu2023difformer}, we enable global interactions between nodes through the all-pair attention mechanism. 
We are particularly inspired by~\cite{wu2024simplifying} and propose the following linear attention function:
\begin{equation}
\begin{small}
\begin{aligned}
    \mathbf{Q}^{(i)} &= \mathrm{Norm}(f_Q^{(i)}(\mathbf{Z}^{(i-1)})), 
    \mathbf{K}^{(i)} = \mathrm{Norm}(f_K^{(i)}(\mathbf{Z}^{(i-1)})), 
    \mathbf{V}^{(i)} = f_V^{(i)}(\mathbf{Z}^{(i-1)}),\\
    \mathbf{D}^{(i)} &=\mathrm{diag}\left(\mathbf{1}+\frac1n \mathbf{Q}^{(i)} ( \mathbf{K}^{{(i)}\intercal} \mathbf{1})\right), 
    \mathbf{Z}^{(i)} =\mathbf{D}^{(i)-1}\left[\mathbf{V}^{(i)}+\frac1n \mathbf{Q}^{(i)} ( \mathbf{K_i} ^{{{(i)}\intercal}} \mathbf{V }^{(i)})\right],
\label{eqn:attention}
\end{aligned}
\end{small}
\end{equation}
where $f_Q^{(i)}, f_K^{(i)}$, and $f_V^{(i)}$ are linear feed-forward layers containing $O(d^2)$ parameters, respectively. We denote by $\mathrm{Norm}(\mathbf{X}) = \mathbf{X}/\|\mathbf{X}\|_F$. 
Note that $\mathbf{Z}^{(i)}$ is of dimension $n \times d$, while we set $\mathbf{Z}^{(0)} = \mathrm{MLP}(\mathbf{X})$. 

We use two layers of linear attention, which is sufficient to capture implicit dependencies between arbitrary node pairs as shown by our experimental analysis, and exhibits much better efficiency than the softmax attention~\cite{vaswani2017attention}. 
For each layer $i$, the above equation only incurs $O(nd^2)$ time. 
In order to jointly attend to information from different representation subspaces, we apply multi-head attention to compute $h$ embeddings $\mathbf{Z}_1, \ldots, \mathbf{Z}_h$, based on which the global representation $\mathbf{R}$ is derived:

\begin{equation}
\label{eqn:mutli-head}
    \mathbf{R} = \mathrm{MHA}(\mathbf{Z}_1, \ldots, \mathbf{Z}_h) = \mathrm{Concat}(\mathbf{Z}_1, \ldots, \mathbf{Z}_h)\mathbf{W}_{O}.
\end{equation}\label{eqn:mutlt-head}
Here $\mathbf{W}_{O}$ represents a $hd \times d$-sized transformation matrix.


\subsection{Hierarchical Cross-Graph Transport Cost} 
To fully leverage the global representation in the alignment phase, we present a hierarchical cross-graph transport cost design that efficiently integrates global information from different perspectives. 
Firstly, we use the Gromov-Wasserstein Distance (GWD) to model the overall structure similarity across two graphs. 
We have 
\begin{align}
    \mathbf{D}_s &= \beta_s^{(1)} \cdot \mathbf{A}_s + \beta_s^{(2)} \cdot K(\mathbf{R}_s, \mathbf{R}_s^\intercal), 
    \mathbf{D}_t = \beta_t^{(1)} \cdot \mathbf{A}_t + \beta_t^{(2)} \cdot K(\mathbf{R}_t, \mathbf{R}_t^\intercal), \nonumber\\
    &\mathbf{Cost}_{gwd}(i, k)  = \sum_{j=1}^{n_s} \sum_{l=1}^{n_t}   |\mathbf{D}_s(u_i, u_j)  - \mathbf{D}_t(v_k, v_l)|^2 \mathbf{T}(j, l), 
    \label{cost-gwd}
\end{align}
where $\boldsymbol{\beta}_s = (\beta_s^{(1)}, \beta_s^{(2)})^\intercal$ and $\boldsymbol{\beta}_t = (\beta_t^{(1)}, \beta_t^{(2)})^\intercal$ denote learnable weight parameters, while $K(\cdot, \cdot): \mathbb{R}^{n \times r} \times \mathbb{R}^{n \times r} \mapsto \mathbb{R}^{n \times n}$ maps node representation to global node relations. 
For simplicity, we implement $K(\cdot, \cdot)$ as the cosine similarity. 
Note that our node relation matrices also integrate the explicit graph structure $\mathbf{A}$ to achieve a balance between the topological inductive bias and the semantic similarity in latent space, while the latter addresses the challenge of long-range node dependencies that previous methods~\cite{slotalign,gwl,dhot,galign,walign,gtcalign} struggle to capture.

Secondly, we employ the Wasserstein Distance (WD) to directly formalize node-wise similarity with the help of global node embeddings $\mathbf{R}_s$ and $\mathbf{R}_t$. 
We exploit the function $K(\cdot, \cdot)$ and simply take the negative of node representation-based similarity since matched node pairs are with lower transport costs\footnote{As the initial features have been standardized, the similarity and cost values lie within the range of $[-1, 1]$.}:

\begin{equation}
    \mathbf{Cost}_{wd}(i,k)  = -K(\mathbf{R}_s(u_i), \mathbf{R}_t^{\intercal}[v_k]).
    \label{cost-wd}
\end{equation}

Finally, we construct the hierarchical cross-graph transport cost matrix by combining the GWD and WD costs with $\alpha$ being the weight hyperparameter:
\begin{equation}
    \mathbf{Cost}(u_i, v_k)= \alpha \cdot \mathbf{Cost}_{gwd}(u_i, v_k) + (1-\alpha) \cdot \mathbf{Cost}_{wd}(u_i, v_k).
    \label{eqn:cross-cost}
\end{equation}
Together, these two components establish a complementary relationship. 
While the GWD term exhibits more power in assessing the alignment cost, the WD term is computationally more efficient with only $O(n^2d)$ time as opposed to the $O(n^3)$ complexity of GWD.

\subsection{Improving Model Efficiency}
It is important to note the time complexity gap between embedding-based methods (e.g.,~\cite{galign, gtcalign}) and OT-based solutions (e.g.,~\cite{gwl, slotalign}), which are of $O(n^2d)$ and $O(n^3)$ complexity, respectively. 
The complexity bottleneck of OT arises from the GWD cost computation (see Equation~\ref{eqn:GWD}), more specifically, the $\mathbf{D}_{s} \mathbf{T} \mathbf{D}_{t}^\intercal$ term~\cite{gwl}, which involves the multiplication of three $n_s \times n_s$, $n_s \times n_t$, and $n_t \times n_t$-sized matrices. Notably, when the relation matrices $\mathbf{D}_s$ and  $\mathbf{D}_t$ are sparse, we can leverage sparse matrix multiplication to reduce computational complexity. 
In other words, we intend to close this complexity gap by employing a sparsified version of the hierarchical transport cost, the design of which presents non-trivial challenges to incorporate global information of the graphs. 

In this work, we propose a simple yet effective heuristic approach for the sparsification of the relation matrices (i.e., $\mathbf{D}_s$ and  $\mathbf{D}_t$).  
Since the relation matrix represents pairwise metrics between nodes within a single domain, we sparsify it by preserving only the top-$k$ (e.g., by setting $k$ as the average degree) most relevant terms for each node, determined jointly by structure similarity and semantic proximity. 
Specifically, we adopt PageRank~\cite{page1999pagerank}, a well adopted measure to model the structure similarity between nodes. In particular, we use its personalized version, i.e., the Personalized PageRank (PPR), in which the importance of node $v$ from node $u$'s perspective, denoted as $PPR(u, v)$, is defined as follows:
\begin{equation}\label{pagerank}
    PPR(u, v) = \gamma \cdot  \sum_{w \in \mathcal{N}(v)}\frac{PPR(u, w)}{\left | \mathcal{N}(w) \right |} + (1-\gamma) \cdot \mathrm{1}_{v=u}, 
\end{equation}
where $\mathrm{1}_{v=u}$ is the indicator variable which takes the value of 1 if the condition holds, and $\gamma$ is the damping factor, typically set to 0.85 by default. 
For each node $u$, we compute the PPR values and only retain the top-$k$ largest ones~\cite{PPRGo}, based on which we obtain the structure-based mask matrices $\mathbf{P}_s$ and $\mathbf{P}_t$ for source and target graphs, namely, setting their matrix entries to 1 corresponding to the top-$k$ items and 0 for others.  
To model the semantic proximity, we simply conduct feature-based similarity computation (i.e., with cosine similarity) for each graph. We sparsify the similarity matrices by only taking the row/column-wise top-$k$ largest entries, and denote the resulting semantic-based mask matrices as $\mathbf{S}_s$ and $\mathbf{S}_t$, respectively.

Subsequently, we unify the structure-based (i.e., $\mathbf{P}_s$ and $\mathbf{P}_t$) and semantic-based (i.e., $\mathbf{S}_s$ and $\mathbf{S}_t$) matrices to obtain the final mask matrices $\mathbf{M}_s$ and $\mathbf{M}_t$ in Equation~\ref{sparse-relation-matrices}, thereby achieving complementary integration of both graph structural and feature information. 
Finally, the mask operation is selectively performed only on the $K(\mathbf{R}_p, \mathbf{R}_p^\intercal)$ term because of the intrinsic sparsity of the adjacency matrix. 

\begin{equation}\label{sparse-relation-matrices}
\begin{aligned}
    \mathbf{M}_p &= \max(\mathbf{P}_p, \mathbf{S}_p), \\
    \mathbf{D}_p &= \mathbf{A}_p + \mathbf{M}_p \odot K(\mathbf{R}_p, \mathbf{R}_p^\intercal), \quad p = s,t
\end{aligned}
\end{equation}

We adopt the following cost design of $\mathbf{Cost}(u_i, v_k)$, which strikes a balance between the exploitation of global information and model efficiency: 
\begin{equation}\label{cost-sparse}
\begin{small}
    \alpha \cdot \sum_{j=1}^{n_s} \sum_{l=1}^{n_t}   |\mathbf{D}_s(u_i, u_j) - \mathbf{D}_t(v_k, v_l)|^2 \mathbf{T}(j, l)   
    +(1-\alpha)\cdot  (-K(\mathbf{R}_s(u_i), \mathbf{R}_t(v_k)^{\intercal})).
\end{small}
\end{equation}
Note that the global representation is explicitly adopted by the WD term, while the computational complexity of the GWD term is reduced thanks to the sparsification. More importantly, as both terms are unified to learn the alignment probability $\mathbf{T}$, the model accuracy is still guaranteed. 
We refer to this model version as \texttt{GlobAlign-E} (\texttt{GlobAlign} for better \texttt{E}fficiency).

\subsection{Optimization Algorithm}
Unsupervised graph alignment is solved by iteratively minimizing the inner product of the cost term $\mathbf{Cost}$ and the alignment matrix $\mathbf{T}$:
\begin{equation}
\arg\min_\mathbf{T} \mathcal{F}( \mathbf{Cost},\mathbf{T}), s.t. \mathrm{T} \mathbf{1}_{n_t}=\boldsymbol{\mu},\mathrm{T}^\intercal \mathbf{1}_{n_s} =\boldsymbol{\nu}. 
\label{obj}
\end{equation}
Equation~\ref{obj} is a nonconvex bi-quadratic problem~\cite{gwl, peyre2016gromov}. 
Let $\Theta = [\Theta_{\text{SA}}, \boldsymbol{\beta}_s, \boldsymbol{\beta}_t]$ represent the learnable parameters, where $\Theta_{\text{SA}}$ is for the self-attention module.
The basic strategy here is to optimize the $\mathbf{Cost}$ (i.e., $\Theta$-update) and the alignment matrix $\mathbf{T}$ (i.e., $\mathbf{T}$-update) in an alternating fashion. 
We adopt the proximal alternating linearized minimization strategy~\cite{bolte2014proximal}. 
For the $\Theta$-update, we have

\begin{equation}
\label{eqn:update-theta}
\boldsymbol{\Theta}^{(i+1)}  =\arg\operatorname*{min}_\Theta \left\{\nabla_{\boldsymbol{\Theta}} \mathcal{F}(\boldsymbol{\Theta}^{(i)})^\intercal \boldsymbol{\Theta} +\frac{1}{2\tau}\|\boldsymbol{\Theta}-{\boldsymbol{\Theta}}^{(i))}\|^{2}\right\}. 
\end{equation}
To update $\mathbf{T}$, we follow the approach outlined in existing works~\cite{gwl,slotalign,dhot}, where entropic regularization is applied, 
and thus we can invoke the Sinkhorn algorithm~\cite{sinkhorn} to tackle it efficiently: 

\begin{equation}
\label{eqn:updatae-T}
    \mathbf{T}^{(i+1)} = \arg\min_\mathbf{T}  \left\{ \langle \mathbf{Cost}^{(i))}, \mathbf{T}^{(i)} \rangle + \varepsilon 
  \text{KL}(\mathbf{T} \| \mathbf{T}^{(i)})  \right\}. 
\end{equation}
where the Kullback-Leibler (KL) divergence $\textrm{ KL}(\mathbf{T}\|\mathbf{T}^{(i)})$ 
serves as a regularizer, whose significance is controlled by the weight hyperparameter $\varepsilon$. 
As the cost and $\mathbf{T}$ are iteratively updated, the process terminates when the optimization objective in Equation~\ref{obj} no longer decreases.
The pseudocode is illustrated in Algorithm~\ref{alg:algorithm}.

\subsection{Complexity Analysis}\label{complexity analysis}

We provide a concise analysis of the time and space complexity of the proposed model. 
The complexity of calculating $\mathbf{Cost}_{wd}$, as defined in Equation~\ref{cost-wd}, is $O(n^2d)$. 
The cost for GWD computation following Equation~\ref{cost-gwd} results in a complexity of $O(n^3)$, Thus the proposed \texttt{GlobAlign} has a complexity of $O(n^3)$. 

For the efficient version,  \texttt{GlobAlign-E}, as formulated in  Equation~\ref{sparse-relation-matrices}, by setting $k$ to the average node degree, the sparsified relation matrices contain $O(kn + m) = O(m)$ non-zero entries, 
thus the $\mathbf{D}_{s} \mathbf{T} \mathbf{D}_{t}^\intercal$ term only has a complexity of $O(nm)$. 
More precisely, since $\mathbf{D}_{s}$ is a sparse matrix of $O(m_s)$ non-zero items, multiplying it with each column of $\mathbf{T}$ has $O(m_s)$ time complexity. 
Hence, $\mathbf{D}_{s}$ times $\mathbf{T}$ results in $O(m_s n_t)$ complexity. 
Similarly, the multiplication of a dense matrix $\mathbf{D}_{s} \mathbf{T}$ with a sparse matrix $\mathbf{D}_{t}^\intercal$ uses $O(n_s m_t)$ time. 
To this end, the GW learning process can be finished in $O(nm)$ time. Thus the proposed \texttt{GlobAlign-E} has a complexity of $O(n^2d + nm)$.

Note that real-world graphs are typically sparse, as shown in Table~\ref{table:dataset-statistics}, we have $m \leq nd$ in most cases, thus the complexity of \texttt{GlobAlign-E} is dominated by the $O(n^2d)$ term. 
It achieves identical asymptotic complexity to the state-of-the-art embedding-based methods~\cite{walign, galign, gtcalign}. 
Regarding space complexity, since the alignment matrix $\mathbf{T}$ is of size $n_s \times n_t$, the overall space complexity is bounded by $O(n^2)$.

\begin{algorithm}
  \caption{The \texttt{GlobAlign} algorithm. } \label{alg:algorithm}
  \KwIn{Attributed graphs $\mathcal{G}_s(\mathcal{V}_s, \mathcal{E}_s, \boldsymbol{X}_s), \mathcal{G}_t(\mathcal{V}_t, \mathcal{E}_t, \boldsymbol{X}_t)$} 
  \KwOut{The alignment matrix $ \mathbf{T} $}
  
  $\mathbf{Z}^{(0)}_s \leftarrow \mathrm{MLP}(\mathbf{X}_s)$,
  $\mathbf{Z}^{(0)}_t \leftarrow \mathrm{MLP}(\mathbf{X}_t$)\;
  $\boldsymbol{\mu}=\frac1{|\mathcal{V}_s|}\mathbf{1}_{|\mathcal{V}_s|}$, $\boldsymbol{\nu}=\frac1{|\mathcal{V}_t|}\mathbf{1}_{|\mathcal{V}_t|}$,
  $\mathbf{T}^{(0)}=\boldsymbol{\mu}\boldsymbol{\nu^{T}}$\;
  \For{$i = 1$ \KwTo $I$}
  {
      Compute $\mathbf{R}_s, \mathbf{R}_t$ by Equations~\ref{eqn:attention} and~\ref{eqn:mutli-head}\;
      \tcp{For \texttt{GlobAlign}} \
      Construct the hierarchical transport cost by Equations~\ref{cost-gwd},~\ref{cost-wd}, and~\ref{eqn:cross-cost}\;
     \tcp{For \texttt{GlobAlign-E}} \
     Construct the hierarchical transport cost by Equation~\ref{cost-sparse}\;
     Update $\boldsymbol{\Theta}^{(i+1)} \leftarrow \boldsymbol{\Theta}^{(i)}$ by Equation~\ref{eqn:update-theta}\;
     Update $\mathbf{T}^{(i+1)} \leftarrow \mathbf{T}^{(i)}$ by Equation~\ref{eqn:updatae-T}\;
  }

  \Return $\mathbf{T}$\;
\end{algorithm}

\section{Experiments} \label{sec:exp}
\subsection{Experimental Setup}
\subsubsection{Datasets}

\begin{table}[!htbp] 
  
  \centering
  \small
  \caption{Datasets and their statistics.} \label{table:dataset-statistics}
  \begin{tabular}{c|cccc}
    \toprule
    Dataset & $|\mathcal{V}_s|, |\mathcal{V}_t|$ & $|\mathcal{E}_s|, |\mathcal{E}_t|$ & Features & Anchors \\
    \midrule
    \multirow{2}{*}{Douban} & 1,118 & 3,022 & \multirow{2}{*}{538} & \multirow{2}{*}{1,118} \\
                                           & 3,906 & 16,328  \\
                                           \midrule
    
    \multirow{2}{*}{Allmv-Imdb} & 5,713 & 119,073 & \multirow{2}{*}{14} & \multirow{2}{*}{5,174} \\
                                           & 6,011 & 124,709  \\
                                           \midrule
    \multirow{2}{*}{ACM-DBLP} & 9,872 & 39,561 & \multirow{2}{*}{17} & \multirow{2}{*}{6,325} \\
                                           & 9,916 & 44,808  \\
                                           \midrule
    \multirow{2}{*}{CS} & 18,333 & 114,652 & \multirow{2}{*}{6,805} & \multirow{2}{*}{18,333} \\
       & 18,333 &147,410  \\
       \midrule
   \multirow{2}{*}{Physics} & 34,493 & 347,147 & \multirow{2}{*}{8,415} & \multirow{2}{*}{34,493} \\
                                       & 34,493 & 446,332  \\

    \bottomrule
\end{tabular}

\end{table}

\begin{table*}[t]
    \centering
    \small

    \resizebox{\textwidth}{!}{%
        \begin{threeparttable}
            \caption{Comparison of model performance against seven baselines on five datasets.}

            \begin{tabular}{cc|cccccccccc}
                \toprule
                Datasets &Metrics& \texttt{kNN}&\texttt{GAlign}& \texttt{WAlign} & \texttt{GTCAlign} & \texttt{GWD} & \texttt{SLOTAlign} & \texttt{UHOT-GM} & \texttt{GlobAlign} &\texttt{GlobAlign-E}\\
                \midrule
                \multirow{5}{*}{Douban} & Hits@1&27.65$\pm_{0.23}$& 45.33$\pm_{0.26}$ & 39.45$\pm_{0.37}$&{60.89}$\pm_{0.36}$ & 3.31$\pm_{0.27}$  & {51.41}$\pm_{0.12}$ & {59.91}$\pm_{0.35}$   &  \textbf{77.10$\pm_{0.18}$}& \underline{75.62$\pm_{0.16}$}\\
                &Hits@5&42.33$\pm_{0.25}$& 67.74$\pm_{0.17}$ & 62.41$\pm_{0.43}$& 76.82$\pm_{0.21}$ & 8.35$\pm_{0.35}$  & 73.48$\pm_{0.23}$ & 71.50$\pm_{0.36}$  &  \textbf{93.67$\pm_{0.21}$} &\underline{91.88$\pm_{0.27}$}\\
                &Hits@10&49.30$\pm_{0.21}$& 78.07$\pm_{0.23}$ & 71.52$\pm_{0.39}$& 82.34$\pm_{0.25}$ & 9.98$\pm_{0.31}$  & {77.71$\pm_{0.19}$} & 77.33$\pm_{0.27}$  &  \textbf{96.21$\pm_{0.25}$}&\underline{94.34$\pm_{0.22}$} \\
                 &Hits@30&63.23$\pm_{0.31}$&84.77$\pm_{0.29}$&83.21$\pm_{0.59}$&89.86$\pm_{0.23}$&12.91$\pm_{0.34}$&82.96$\pm_{0.21}$ &84.74$\pm_{0.26}$&\textbf{97.83$\pm_{0.23}$}&\underline{95.19$\pm_{0.19}$}  \\
                & MRR &35.04$\pm_{0.31}$&56.39$\pm_{0.23}$ &46.23$\pm_{0.43}$ &69.80$\pm_{0.13}$ &5.81$\pm_{0.18}$ & 61.33$\pm_{0.21}$& 67.42$\pm_{0.15}$ &\textbf{84.31$\pm_{0.17}$} &\underline{81.52$\pm_{0.13}$}\\
               
                \midrule
                
                \multirow{5}{*}{Allmv-Imdb} & Hits@1 &30.41$\pm_{0.15}$& {82.13$\pm_{0.19}$}& 52.69$\pm_{0.23}$  & 84.73$\pm_{0.16}$&87.82$\pm_{0.33}$  & 90.61$\pm_{0.11}$ & 91.63$\pm_{0.14}$ & \textbf{96.30$\pm_{0.13}$}&\underline{94.82$\pm_{0.39}$} \\
                &Hits@5&47.14$\pm_{0.20}$& 86.39$\pm_{0.23}$ & 70.91$\pm_{0.32}$& 89.89$\pm_{0.13}$ & 92.33$\pm_{0.25}$  & {92.81$\pm_{0.21}$} & 94.34$\pm_{0.18}$  &  \textbf{97.69$\pm_{0.16}$} &\underline{95.71$\pm_{0.31}$}\\
                &Hits@10&54.31$\pm_{0.41}$& 90.01$\pm_{0.22}$ & 76.52$\pm_{0.53}$ & 91.34$\pm_{0.33}$& 92.83$\pm_{0.23}$ & {93.14$\pm_{0.13}$} & 94.92$\pm_{0.16}$ &  \textbf{97.88$\pm_{0.12}$}&\underline{96.01$\pm_{0.36}$} \\
                &Hits@30&69.68$\pm_{0.33}$&93.33$\pm_{0.15}$&84.57$\pm_{0.24}$&93.71$\pm_{0.28}$&93.40$\pm_{0.18}$&94.72$\pm_{0.11}$ &96.13$\pm_{0.13}$ &\textbf{98.79$\pm_{0.09}$}&\underline{96.86$\pm_{0.14}$} \\
                & MRR&38.60$\pm_{0.37}$&84.98$\pm_{0.23}$  &61.23$\pm_{0.11}$&87.14$\pm_{0.25}$ &89.66$\pm_{0.19}$ &{91.61$\pm_{0.12}$} &92.73$\pm_{0.16}$  &\textbf{97.32$\pm_{0.13}$}&\underline{95.97$\pm_{0.23}$} \\
                \midrule
                \multirow{5}{*}{ACM-DBLP}&  Hits@1 &36.31$\pm_{0.42}$ &{70.15$\pm_{0.13}$}&63.41$\pm_{0.43}$& 60.92$\pm_{0.14}$ & 56.45$\pm_{0.23}$   & {65.96$\pm_{0.11}$} & TO  &  \textbf{78.25$\pm_{0.12}$} &\underline{75.19$\pm_{0.23}$}\\
                &Hits@5&66.81$\pm_{0.31}$ &{87.24$\pm_{0.12}$}& 83.12$\pm_{0.42}$& 75.56$\pm_{0.21}$ & 77.11$\pm_{0.13}$  & {85.38$\pm_{0.16}$} & TO  &  \textbf{93.89$\pm_{0.16}$}&\underline{91.22$\pm_{0.26}$} \\
                &Hits@10&76.29$\pm_{0.32}$&{91.44$\pm_{0.21}$} & 86.63$\pm_{0.33}$& 80.04$\pm_{0.16}$ & 82.22$\pm_{0.22}$   & {87.89$\pm_{0.15}$} & TO &  \textbf{97.02$\pm_{0.10}$}&\underline{95.21$\pm_{0.21}$} \\
                &Hits@30& 84.68$\pm_{0.42}$&94.93$\pm_{0.22}$&93.67$\pm_{0.31}$&88.12$\pm_{0.14}$&85.23$\pm_{0.23}$&90.41$\pm_{0.16}$& TO 
                &\textbf{98.38$\pm_{0.09}$}&\underline{96.45$\pm_{0.17}$}  \\
                & MRR &38.66$\pm_{0.28}$&{77.57$\pm_{0.14}$}& 70.78$\pm_{0.53}$&67.71$\pm_{0.11}$&64.81$\pm_{0.19}$  &73.78$\pm_{0.16}$ & TO &\textbf{84.52$\pm_{0.13}$} &\underline{81.96$\pm_{0.22}$}\\

                \midrule
                   \multirow{5}{*}{CS} & Hits@1 &76.14$\pm_{0.23}$& {86.71$\pm_{0.08}$}&84.33$\pm_{0.25}$   & 92.11$\pm_{0.05}$& TO & TO &TO & \textbf{99.79$\pm_{0.04}$}&\underline{98.53$\pm_{0.11}$} \\
                &Hits@5&97.45$\pm_{0.12}$&98.42$\pm_{0.06}$  &87.64$\pm_{0.17}$ &97.56$\pm_{0.04}$  & TO  &TO & TO  &  \textbf{100}&\underline{99.34$\pm_{0.13}$} \\
                &Hits@10&98.81$\pm_{0.08}$& 99.28$\pm_{0.13}$ &88.28$\pm_{0.23}$  &98.41$\pm_{0.09}$ & TO & TO & TO  &  \textbf{100}&\underline{99.71$\pm_{0.08}$} \\
                &Hits@30&99.32$\pm_{0.06}$ &99.78$\pm_{0.04}$ &89.12$\pm_{0.13}$& 99.27$\pm_{0.09}$ &TO  &TO & TO &\textbf{100}  &\textbf{100}  \\
                & MRR&85.33$\pm_{0.11}$&92.89$\pm_{0.05}$  &85.78$\pm_{0.14}$& 97.35$\pm_{0.06}$&TO &TO &TO &\textbf{99.93$\pm_{0.02}$}&\underline{99.21$\pm_{0.09}$} \\
               \midrule
                   \multirow{5}{*}{Physics} & Hits@1 &80.59$\pm_{0.16}$& {87.01$\pm_{0.23}$}&88.26$\pm_{0.29}$ &\underline{93.23$\pm_{0.11}$ } & TO  & TO & TO & TO&\textbf{99.03}$\pm_{0.15}$ \\
                &Hits@5&98.11$\pm_{0.15}$&98.19$\pm_{0.16}$  &92.83$\pm_{0.23}$ &\underline{98.47$\pm_{0.21}$}  & TO  & TO & TO  &  TO&\textbf{99.44$\pm_{0.19}$} \\
                &Hits@10&99.29$\pm_{0.09}$& 98.41$\pm_{0.12}$ &93.34$\pm_{0.18}$  &\underline{99.35$\pm_{0.18}$} & TO & TO & TO  &  TO&\textbf{100} \\
                &Hits@30& 99.56$\pm_{0.07}$&99.33$\pm_{0.11}$ &94.11$\pm_{0.18}$&\underline{99.77$\pm_{0.18}$}  & TO & TO&  TO  & TO&\textbf{100}\\
                &MRR&88.03$\pm_{0.15}$&97.56$\pm_{0.17}$  &90.67$\pm_{0.11}$&\underline{98.58$\pm_{0.07}$} &TO&TO &TO  &TO&\textbf{99.37$\pm_{0.11}$} \\
              
                \bottomrule
             
            \end{tabular}
            \begin{tablenotes}
            \footnotesize
            \item[1]The bold represents the best results, and the underlined numbers denote the second-best results.
            \item[2]TO (Time Out) stands for that the corresponding method could not finish within 3 hours.
        \end{tablenotes}
            \label{table:result}
        \end{threeparttable}
   }     
\end{table*}

Our proposed methods are evaluated on five datasets (see Table~\ref{table:dataset-statistics}), including three well-adopted real-world datasets in~\cite{walign, galign, gtcalign, gwl, slotalign}: the social network Douban Online-Offline~\cite{final}, the co-author network ACM-DBLP~\cite{zhang2018attributed}, and the movie recommendation network Allmv-Imdb~\cite{galign}. Additionally, we introduce two larger  datasets: Coauthor CS~\cite{shchur2018pitfalls} and Coauthor Physics~\cite{shchur2018pitfalls}, mainly for efficiency evaluation. 
For each graph, we create a pair of graphs by perturbing 10\% and 40\% edges in the original graph, respectively.

\subsubsection{Baselines}
We evaluate \texttt{GlobAlign} and \texttt{GlobAlign-E} against seven representative baselines, categorized as follows: 
(1) traditional method: \texttt{kNN}, (2) embedding-based methods: \texttt{GAlign}~\cite{galign}, \texttt{WAlign}~\cite{walign}, and \texttt{GTCAlign}~\cite{gtcalign}, 
(3) OT-based methods: \texttt{GWD}~\cite{gwl}, \texttt{SLOTAlign}~\cite{slotalign}, and \texttt{UHOT-GM}~\cite{dhot}. We exclude approaches such as those presented in \cite{fusegwd, wang2018cross, heimann2018regal,xu2019scalable}, as they have been consistently outperformed by the baseline methods under consideration. Additionally, we omit methods (such as~\cite{hermanns2023grasp}) that impose strict constraints (e.g., requiring the graphs to be of identical size for alignment), or those conducting exact alignment via bipartite graph matching, leading to a computational complexity of at least $O(n^3)$~\cite{hermanns2023grasp,jonker1988shortest,kuhn1955hungarian}.

\subsubsection{Metrics.}
Following~\cite{walign,slotalign,gtcalign}, we employ Hits@$k$ with $k = \{1, 5, 10, 30\}$ and Mean Reciprocal Rank (MRR) to evaluate the effectiveness of all methods. Given a node $u \in \mathcal{G}_s$, if the aligned node $v \in \mathcal{G}_t$ is among the top-$k$ nodes with the largest alignment probability, it is considered as a hit. For a dataset with $|\mathcal{S}^*|$ ground truth, Hits@k is defined as $\textrm{Hits@}k=\frac{\#\mathrm{~of~hits}}{|\mathcal{S}^*|}$.
MRR is computed by averaging the inverse of the ground truth ranking, i.e., $\textrm{MRR}=\frac{1}{|\mathcal{S}^*|}\sum_{(u, v) \in \mathcal{S}^*}\frac{1}{\mathrm{rank}(u,v)}$. 
Notably, Hits@1 and MRR directly reflect the alignment accuracy. 
We also report each method's running time to evaluate efficiency. For learning-based models, we define the running time as the total elapsed time until model convergence.
Our codes will be made available after the review process.

\subsection{Comparison of Model Performance}

\subsubsection{Model Accuracy}
The results are presented in Table~\ref{table:result}. 
We conduct 10 runs of experiments for each method. Due to the high complexity of certain methods~\cite{gwl,slotalign,dhot}, experiments were terminated if they could not be completed within three hours. 
Our proposed algorithms \texttt{GlobAlign} and \texttt{GlobAlign-E}, 
demonstrate substantial accuracy gains over existing methods across all datasets. 
For instance, on the Douban dataset, the proposed methods achieve notable improvements of 26.62\% and 24.19\% compared to \texttt{GTCAlign}, the best baseline, respectively. 
Similarly, on the DBLP dataset, the proposed approaches outperform the state-of-the-art method  by 11.55\% and 7.18\%, respectively. 
On the  datasets CS and Physics, while all methods exhibit better performance, our methods consistently achieve further enhancements. 
These results highlight the effectiveness of our ``global representation and alignment'' paradigm, especially the critical role of global representation.

For different versions of the proposed method, \texttt{GlobAlign-E} generally achieves comparable accuracy against \texttt{GlobAlign}. This also validates the effectiveness of our hierarchical transport cost design, particularly, Equation~\ref{cost-sparse}. 
For other OT-based methods including \texttt{GWD}, \texttt{SLOTAlign}, and \texttt{UHOT-GM}, 
we note that their performance demonstrates a gradual improvement along with the progressive incorporation of richer information into relation matrix design.

Last but not least, we would like to point out that both embedding and OT-based baselines generally struggle with the accuracy-efficiency tradeoff on the real-world datasets (Cf. Figure~\ref{fig:time} in Section~\ref{intro}). 
In other words, the improvement in prediction accuracy is achieved at the cost of less efficiency. 
By contrast, our methods significantly outperform them with respect to the tradeoff.

\begin{figure*}[t]
\centering
\resizebox{\textwidth}{!}{
\begin{tabular}{ccccc} 
    \centering
    \hspace{-4mm}\includegraphics[ height=30mm]{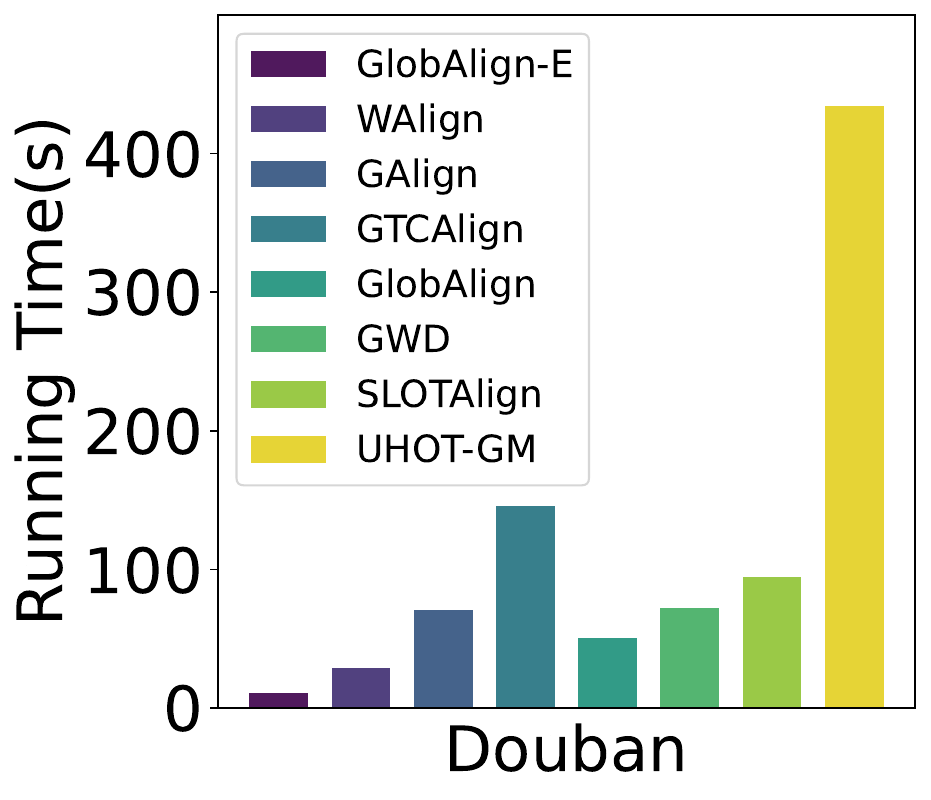} &
    \hspace{-4mm}\includegraphics[ height=30mm]{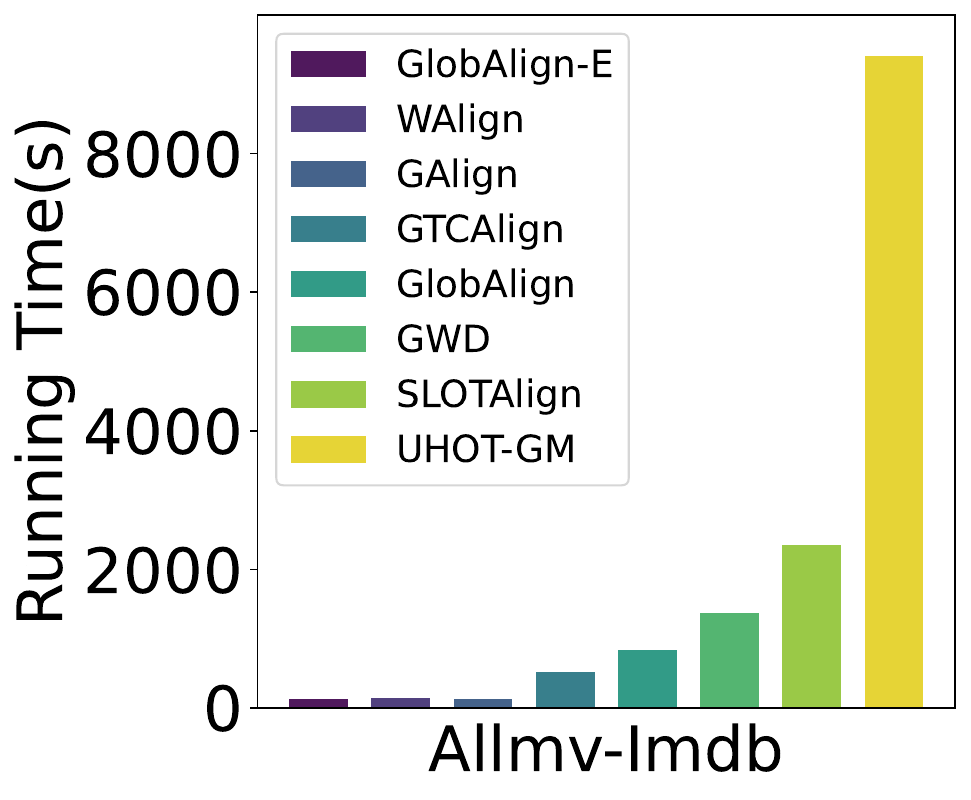} &
    \hspace{-4mm}\includegraphics[ height=30mm]{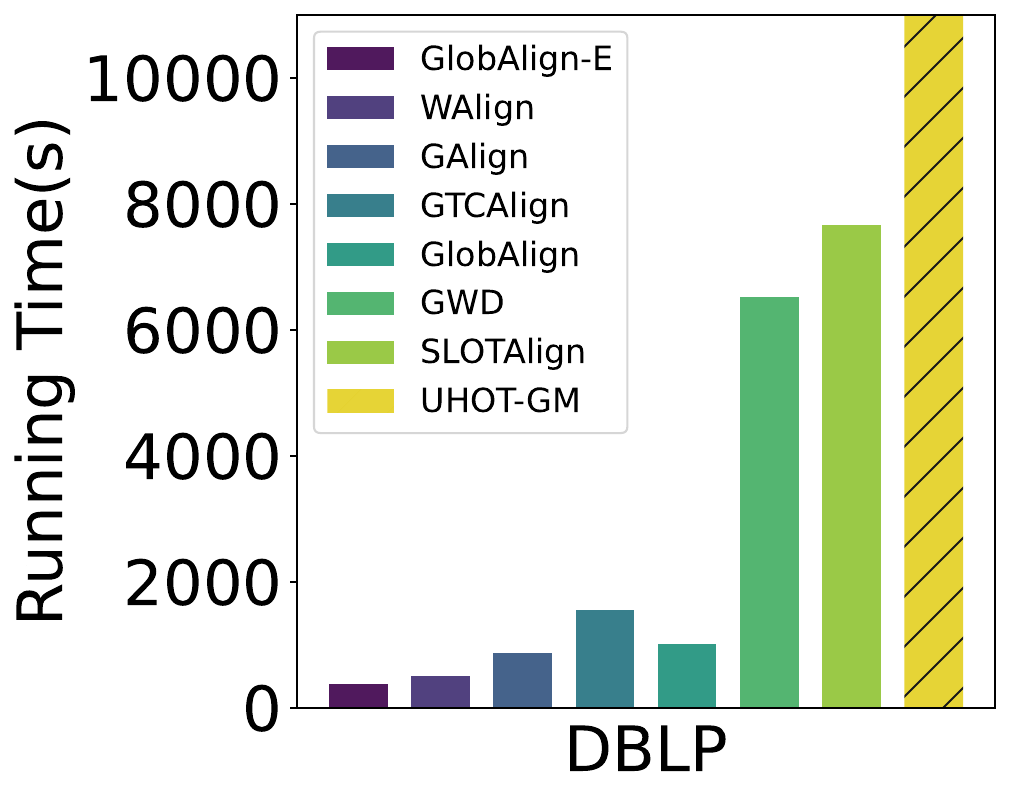} &
    \hspace{-4mm}\includegraphics[ height=30mm]{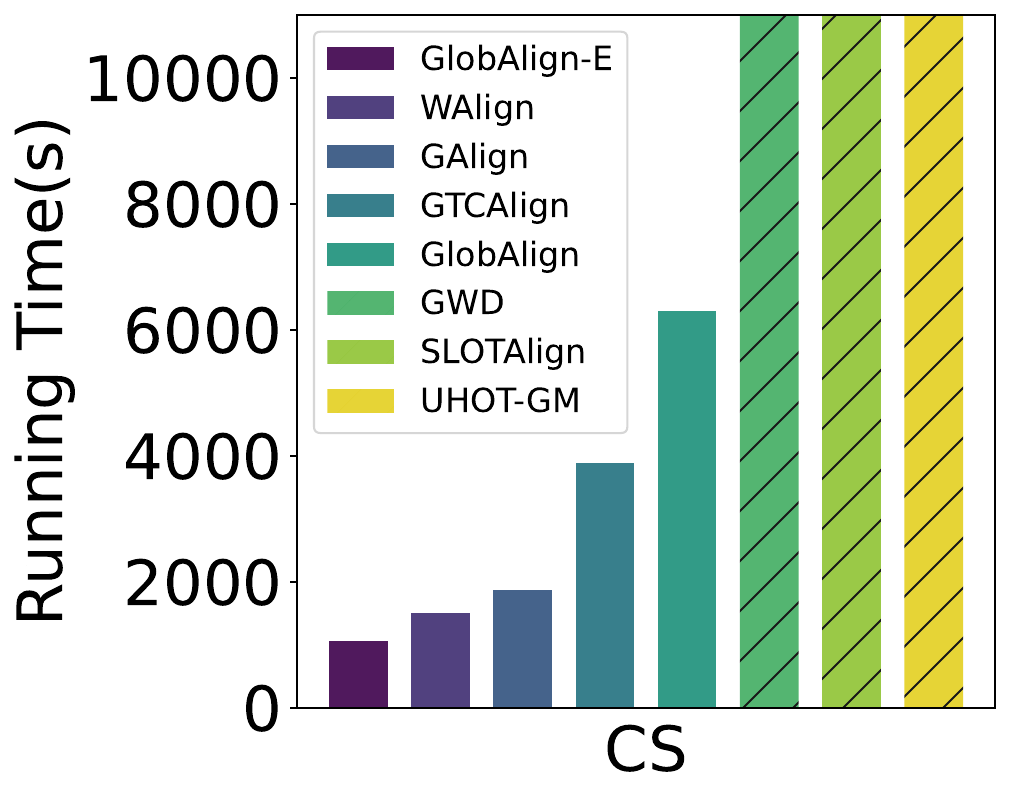} & 
    \hspace{-4mm}\includegraphics[ height=30mm]{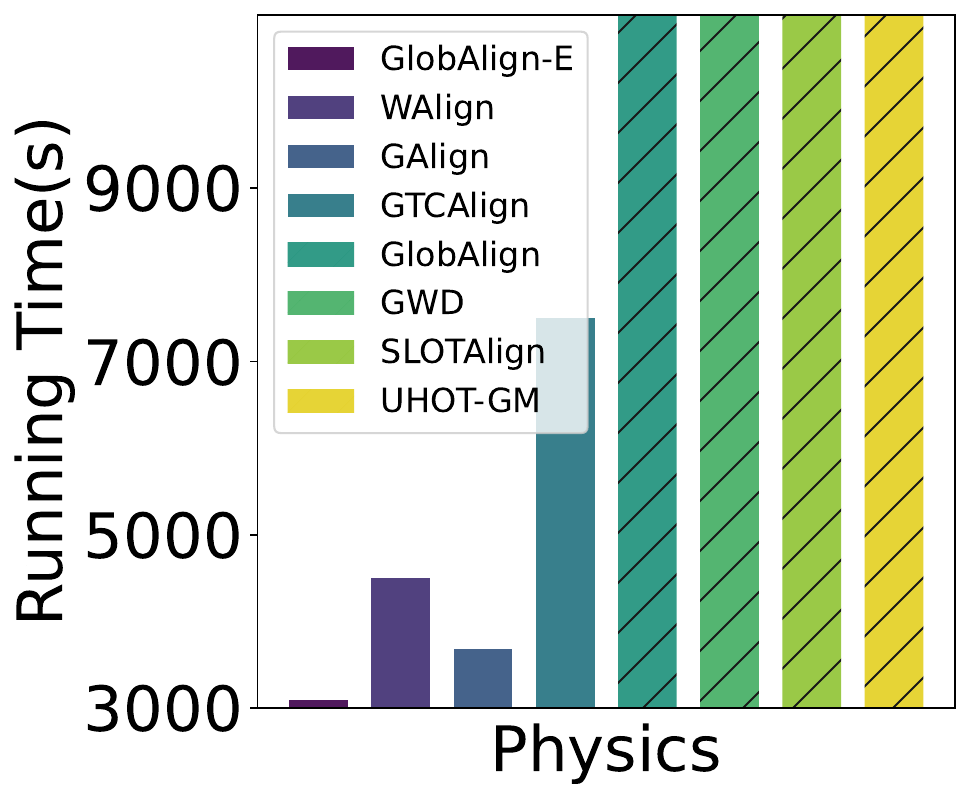} \\
\end{tabular}
}
\vspace{-3mm}
\caption{Running time comparison. Vertical bars with diagonal lines represent methods that exceed the 3-hour time limit }
\label{fig:running time}

\end{figure*}

\subsubsection{Model Efficiency}
As shown in Figure~\ref{fig:running time}, We visualize the running time of all methods on datasets of varying sizes. 
When the number of nodes reaches $10^4$, as in the DBLP dataset, \texttt{UHOT-GM} first fails to complete the experiment within the three-hour limit due to its high computational complexity. 
As the dataset size continues to grow, other OT-based methods, \texttt{GWD} and \texttt{SLOTAlign}, also fail to complete the experiment in a reasonable time. 
Despite sharing the same theoretical time complexity, these methods exhibit running time discrepancies, potentially due to differences in model expressiveness, which may require varying numbers of epochs to achieve convergence. 
In contrast, embedding-based methods are generally faster, consistent with our theoretical analysis. 
For the proposed models, \texttt{GlobAlign} shows outstanding efficiency compared to other OT-based methods, whereas \texttt{GlobAlign-E} achieves orders of magnitude speedup over them. 
Besides, its running time improvement against embedding-based models is also remarkable.

\begin{figure}[ht]
\centering
\begin{tabular}{cc} 
    \centering
    \hspace{0mm}\includegraphics[ height=35mm]{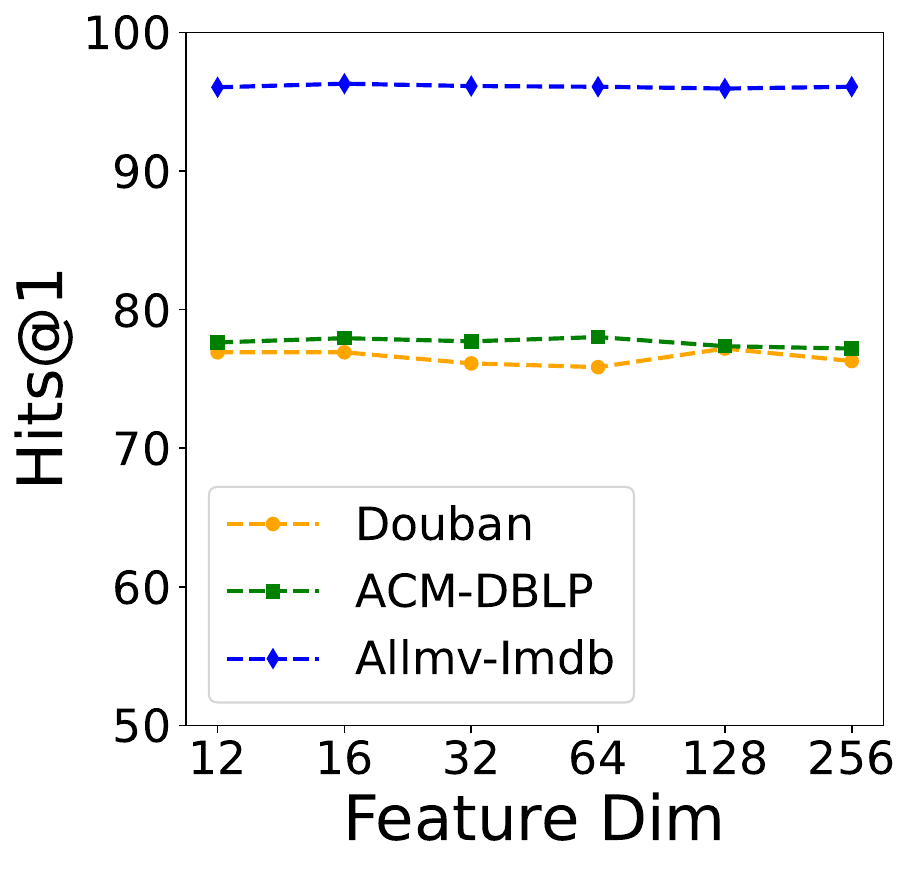} & 
    \hspace{0mm}\includegraphics[ height=35mm]{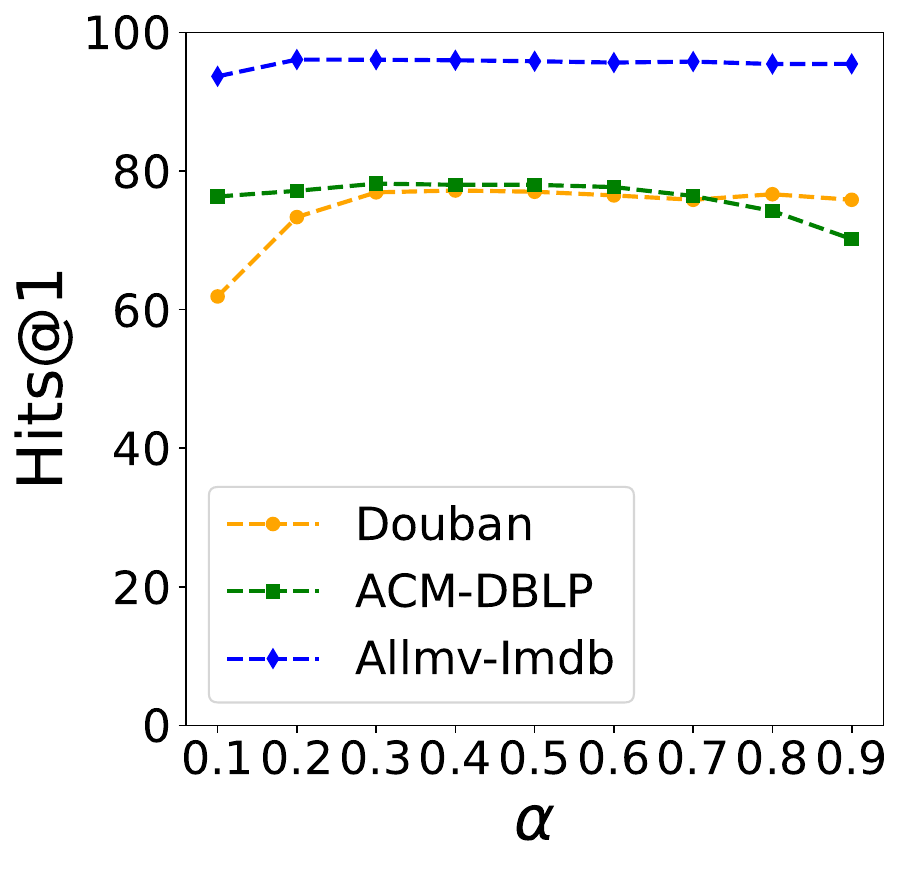} \\
\end{tabular}
\caption{Impact of hyperparameters}
\label{fig:hyp}
\end{figure}

\vspace{-3mm}

\begin{figure*}[!htbp]
\centering
\begin{tabular}{ccc} 
    \centering
    \hspace{0mm}\includegraphics[ height=30mm]{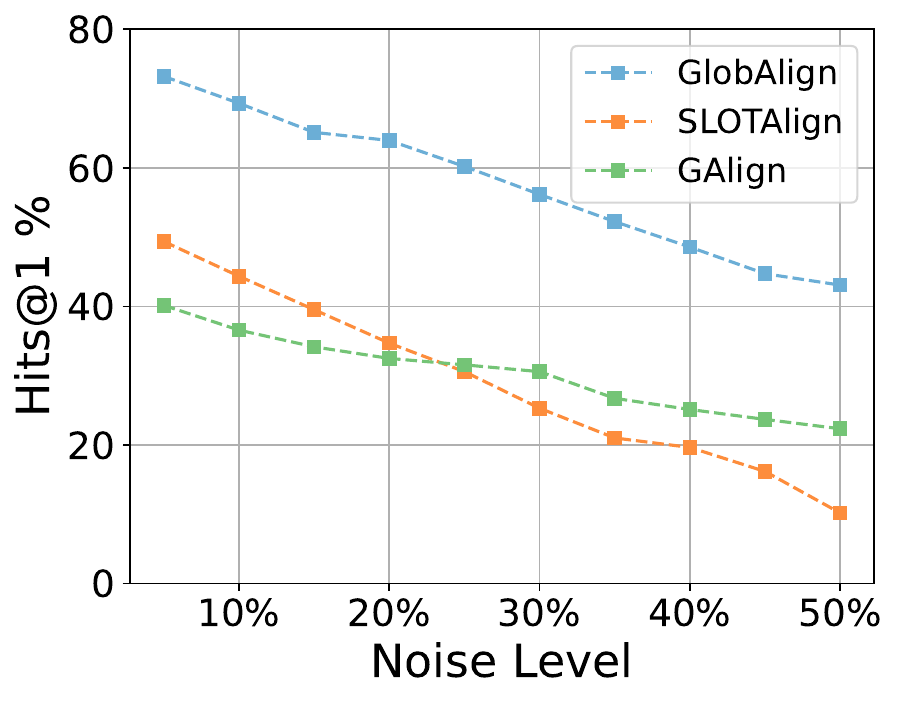} & 
    \hspace{0mm}\includegraphics[ height=30mm]{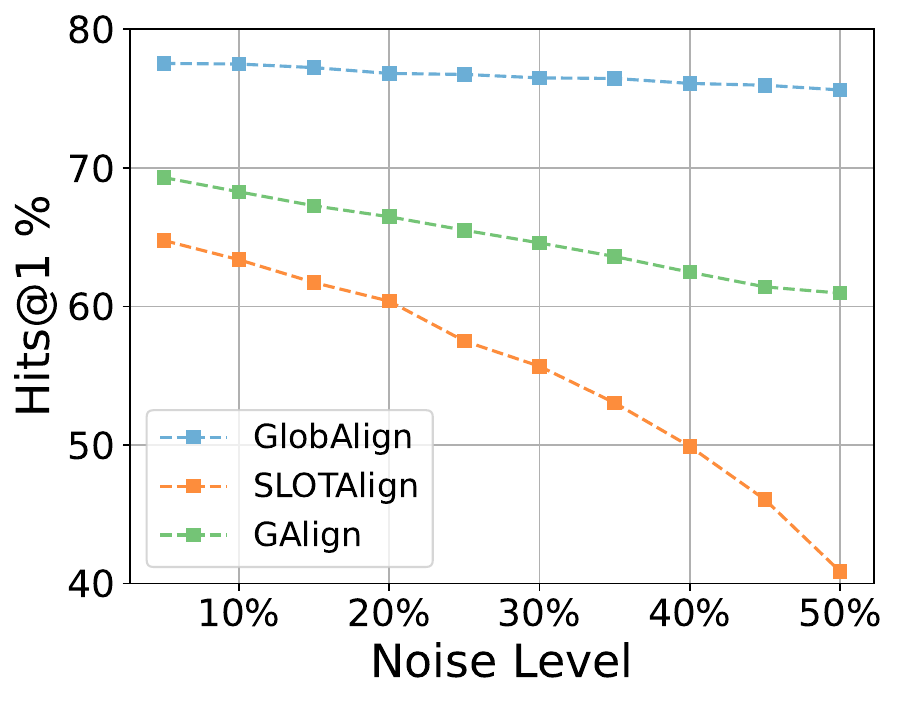} & 
    \hspace{0mm}\includegraphics[ height=30mm]{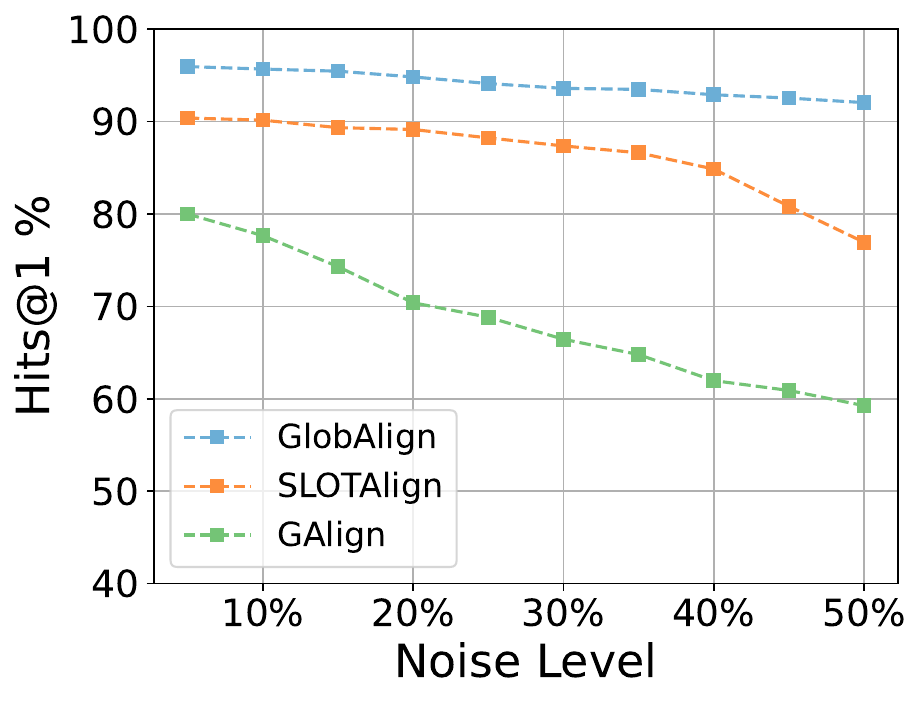}\\
    {Douban} & {ACM-DBLP} & {Allmv-Imdb} \\
\end{tabular}
\vspace{3mm}
\caption{Accuracy (Hits@1) vs. noise level}
\label{fig:robustness-hits1}
\end{figure*}

\subsection{Further Analysis of the Proposed Model}
\subsubsection{Robustness Analysis}
We evaluate the model's performance across three real datasets under various noise levels. 
We select two representative methods \texttt{SLOTAlign}~\cite{slotalign} and \texttt{GAlign}~\cite{galign} for comparison. 
Our model exhibits significantly greater robustness under different noise levels (Figure~\ref{fig:robustness-hits1}).
Notably, on the ACM-DBLP and Allmv-Imdb datasets, even when 50\% of the edges are perturbed, our model surpasses the performance of \texttt{GAlign} and \texttt{SLOTAlign} under noise-free conditions. 
We believe that methods solely based on local graph information struggle to capture relationships between nodes when a certain percentage of edges are absent.

\subsubsection{Sensitivity Analysis}
We also explore the model's performance by varying feature dimensions and with different weight coefficients for $\mathbf{Cost}_{gwd}$ and $\mathbf{Cost}_{wd}$. 
As shown in Figure~\ref{fig:hyp}, the model achieves stable performance across a wide range of feature dimensions. 
Additionally, as the weight parameter $\alpha$ in Equation~\ref{eqn:cross-cost} varies, the model's performance exhibits slight fluctuations, which is dataset-specific, reflecting the different contributions of $\mathbf{Cost}_{gwd}$ and $\mathbf{Cost}_{wd}$. In practice, we set $\alpha$ to 0.5, which strikes a balance between these two components.

\begin{figure}[ht]
\centering
\begin{tabular}{c} 
    \centering
    \hspace{-4mm}\includegraphics[ height=40mm]{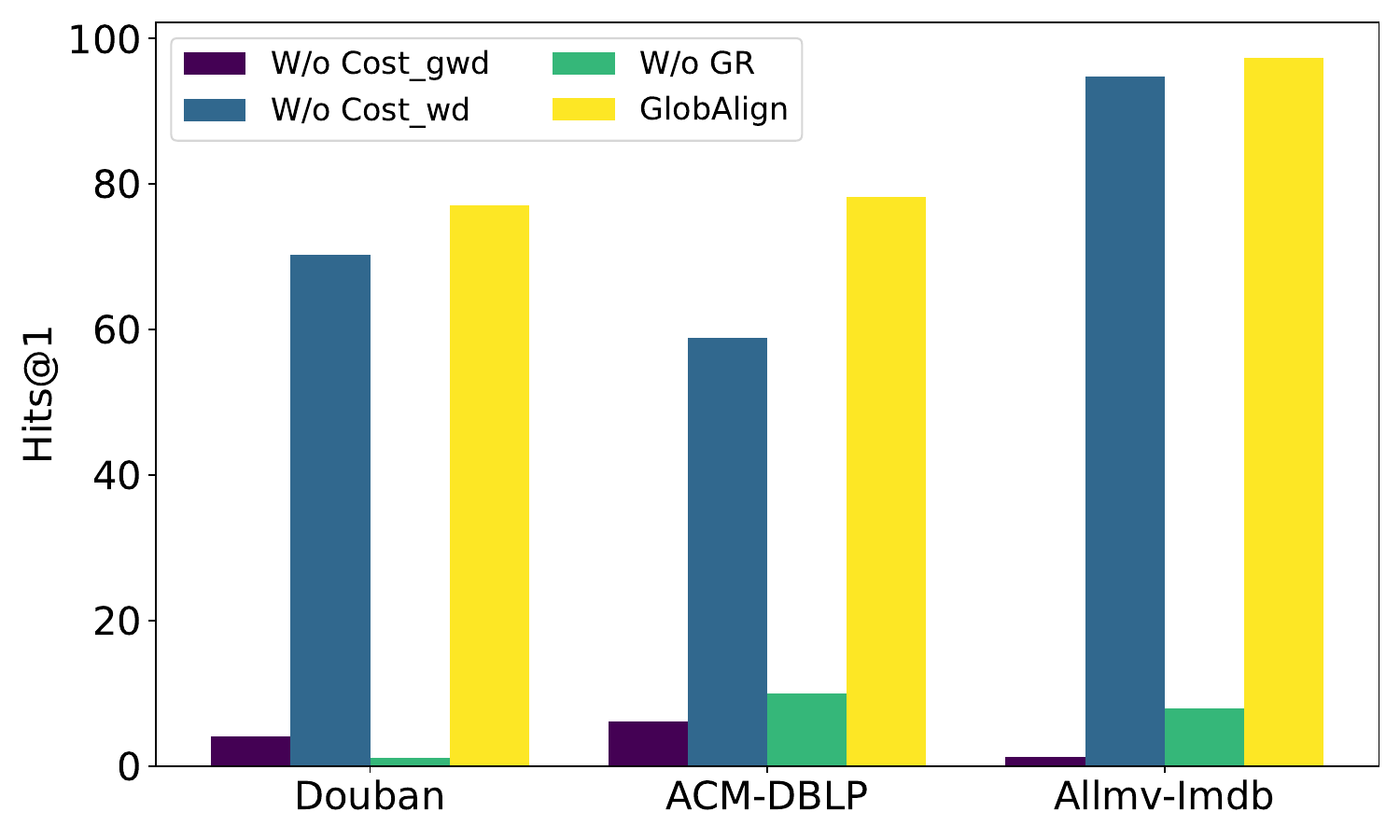} 
\end{tabular}
\vspace{-3mm}
\caption{Ablation study}
\label{fig:ablation}
\end{figure}

\vspace{-5mm}

\begin{figure}[ht]
\centering
\begin{tabular}{cc} 
    \centering
    \hspace{0mm}\includegraphics[height=35mm]{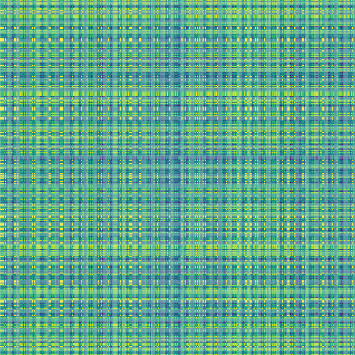} & 
    \hspace{0mm}\includegraphics[height=35mm]{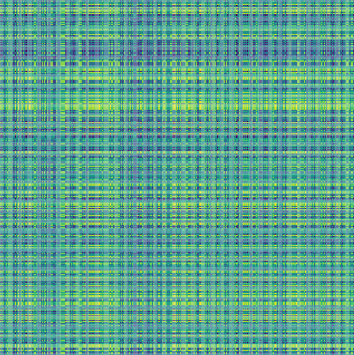} \\
    {(a) Offline Graph} &  {(b) Online Graph} \\
\end{tabular}
\vspace{3mm}
\caption{The attention matrices for the Douban dataset}
\label{fig:attention}
\end{figure}

\vspace{3mm}
\subsubsection{Ablation Study}
To investigate the impact of different components,
we conduct ablation studies on three real-world datasets with the following model variants.
W/o $\text{Cost}_{\text{gwd}}$ (resp. W/o $\text{Cost}_{\text{wd}}$) represents that \texttt{GlobAlign} omits the GWD (resp. WD) term in Equation~\ref{eqn:cross-cost}. 
W/o GR denotes that \texttt{GlobAlign} does not explicitly consider self-attention-based interactions in the GWD and WD terms, retaining only the local structural alignment in the GWD term. 

As shown in Figure~\ref{fig:ablation}, each component plays a crucial role, and the GWD-based cost is indispensable, without which \texttt{GlobAlign} cannot make a reasonable prediction. 
This can be attributed to the superior properties of GWD in modeling cross-domain problems. Moreover, when the hierarchical cross-graph transport cost loses global interaction information (the W/o GR variant), the performance degrades significantly, demonstrating the critical importance of our proposed global representation and alignment strategy. Finally, by comparing the model complexity and accuracy between \texttt{GlobAlign} and \texttt{GlobAlign-E}, we validate the effectiveness of the sparsification strategy.

\subsubsection{Analysis of Global Attention}
We visualize the attention matrix of Douban, as shown in Figure~\ref{fig:attention}. 
In general, the attention matrix is more dense and shares less resemblance to the graph topological structure. 
Thus, we believe that our global representation with the self-attention mechanism successfully extracts deeper influences between nodes for the graph alignment problem.

\section{Related Work}
Early works typically formulate the graph alignment problem as the maximum common subgraph isomorphism problem~\cite{ref3, ref23, ref40} or the quadratic assignment problem~\cite{ref15, ref22}, and return the exact alignment consisting a set of aligned node pairs. 
However, in practice, it is difficult to achieve exact graph alignment because of its NP-hardness~\cite{gwl}. 
Some classical methods also employ spectral functions~\cite{hermanns2023grasp} and linear assignment algorithms~\cite{kuhn1955hungarian,jonker1988shortest} to obtain matching results. These approaches typically have a computational complexity of at least $O(n^3)$.

In recent years, the research focus has shifted towards learning-based methods, which predict an alignment matrix indicating the alignment probability for every node pair across two graphs. 
These works can be roughly classified as \emph{embedding-based} and \emph{Optimal Transport (OT)-based} methods. 
Specifically, a line of research~\cite{walign,galign,gtcalign,peng2023robust,heimann2018regal} follows the ``embed-then-cross-compare'' paradigm.
First, node embeddings are generated through graph representation learning and then mapped into a unified feature space using transformation functions~\cite{walign} or weight-sharing mechanisms~\cite{galign,gtcalign}.
Then the alignment matrix is derived by computing the similarity between node embeddings, with some methods~\cite{peng2023robust,galign} further refining the results to improve prediction accuracy.

Another popular approach is to model the graph alignment problem using optimal transport (OT). 
These methods~\cite{gwl,slotalign,dhot,chen2025combalign,chen2025enhancing,chen2025leveraging} treat graphs as distributions and use OT and its extended version in metric spaces, the Gromov-Wasserstein distance~\cite{memoli2011gromov}, to address cross-domain challenges. 
Based on the intra-graph costs that incorporate node-wise interactions, the cross-graph node correspondences are established by minimizing the total transport cost between the two distributions. 

Meanwhile, there exists a plethora of works~\cite{jiao2024cina,final,nextalign,REGAL,yan2021bright,zeng2023parrot} tackling the supervised graph alignment problem, in which a certain proportion of node correspondences is known in advance. 
In particular, consistency-based approaches~\cite{final,nextalign} usually assume the existence of a noisy permutation between the aligned graphs and put emphasis on local topology and attribute consistency.
Embedding-based methods compute low-dimensional node embeddings through matrix factorization~\cite{REGAL} or Random Walk with Restart (RWR)~\cite{ yan2021bright}, ensuring anchor node pairs have close embeddings.
A recently proposed method~\cite{zeng2023parrot} formulates semi-supervised graph alignment as an optimal transport problem and devises transport cost inspired by RWR for the alignment process. 

We also find knowledge graph entity alignment a closely related research field with extensive literature~\cite{wang2020knowledge, sun2018bootstrapping,  zeng2021comprehensive, qi2021unsupervised,  liu2022selfkg, zeng2022interactive, mao2021alignment, mao2022lightea, tang2023fused}, aiming to identify matching entities while incorporating richer semantic information, an extra ingredient compared to our studied problem.

\vspace{-3mm}
\section{Conclusion}
We propose \texttt{GlobAlign}, a global representation and optimal transport-based approach for graph alignment without supervision. 
Motivated by the limitations of the ``local representation, global alignment'' paradigm adopted by existing methods, \texttt{GlobAlign} follows a newly proposed ``global representation and alignment'' paradigm based on our problem formalization and theoretical analysis. 
It achieves superior performance by incorporating the global attention mechanism and a hierarchical cross-graph transport cost, whereas its variant targeting for better efficiency, i.e., \texttt{GlobAlign-E}, bridges the time complexity gap between representative embedding and OT-based methods, demonstrating significant speedup over both of them. 
Our solution advances unsupervised graph alignment for both accuracy and efficiency, which is prominent compared to recent studies.

\section*{Declarations}

\begin{flushleft}
\textbf{Funding} This work was supported by National Natural Science Foundation of China under grant 62202037 (Yu Liu) and 62372031 (Youfang Lin). \\[10pt]
\textbf{Competing Interest} The authors declare that they have no competing interest.\\[10pt]
\textbf{Ethical Approval} Not applicable.\\[10pt]
\textbf{Availability of data and materials} 
The source code and data will be made publicly available after the peer review process of the paper. \\[10pt]
\textbf{Consent for publication}
We confirm that this manuscript has not been published elsewhere and is not under consideration by another journal. All authors have approved the manuscript and agree with its submission to World Wide Web.  \\[10pt]
\textbf{Author Contributions} Songyang Chen and Yu Liu proposed the method and completed the manuscript writing. Songyang Chen conducted all the experiments for the paper, while Youfang Lin, Shuai Zheng, and Lei Zou provided valuable revision suggestions. \\[10pt]
\textbf{Acknowledgement}
The authors gratefully acknowledge Prof. Dixin Luo and Prof. Hongteng Xu for their valuable contribution in providing the baseline code.
\end{flushleft}


\bibliography{sn-bibliography}

@String{Computing = "Computing" }

@String{Computer = "{IEEE} Computer" }

@String{Academic = "Academic Press" }

@String{Springer = "Springer-Verlag" }

@ArtifactSoftware{R,
    title = {R: A Language and Environment for Statistical Computing},
    author = {{R Core Team}},
    organization = {R Foundation for Statistical Computing},
    address = {Vienna, Austria},
    year = {2019},
    url = {https://www.R-project.org/},
}

@inproceedings{vaswani2017attention,
  title={Attention is all you need},
  author={Vaswani, Ashish and Shazeer, Noam and Parmar, Niki and Uszkoreit, Jakob and Jones, Llion and Gomez, Aidan N and Kaiser, {\L}ukasz and Polosukhin, Illia},
  booktitle={Advances in Neural Information Processing systems},
  pages={5998--6008},
  year={2017}
}

@inproceedings{slotalign,
  title={Robust attributed graph alignment via joint structure learning and optimal transport},
  author={Tang, Jianheng and Zhang, Weiqi and Li, Jiajin and Zhao, Kangfei and Tsung, Fugee and Li, Jia},
  booktitle={2023 IEEE 39th International Conference on Data Engineering (ICDE)},
  pages={1638--1651},
  year={2023},
  organization={IEEE}
}

@inproceedings{galign,
  title={Adaptive network alignment with unsupervised and multi-order convolutional networks},
  author={Trung, Huynh Thanh and Van Vinh, Tong and Tam, Nguyen Thanh and Yin, Hongzhi and Weidlich, Matthias and Hung, Nguyen Quoc Viet},
  booktitle={2020 IEEE 36th International Conference on Data Engineering (ICDE)},
  pages={85--96},
  year={2020},
  organization={IEEE}
}

@article{zhang2018attributed,
  title={Attributed network alignment: Problem definitions and fast solutions},
  author={Zhang, Si and Tong, Hanghang},
  journal={IEEE Transactions on Knowledge and Data Engineering},
  volume={31},
  number={9},
  pages={1680--1692},
  year={2018},
  publisher={IEEE}
}

@inproceedings{gwl,
  title={Gromov-wasserstein learning for graph matching and node embedding},
  author={Xu, Hongteng and Luo, Dixin and Zha, Hongyuan and Duke, Lawrence Carin},
  booktitle={International conference on machine learning},
  pages={6932--6941},
  year={2019},
  organization={PMLR}
}

@inproceedings{walign,
  title={Unsupervised graph alignment with wasserstein distance discriminator},
  author={Gao, Ji and Huang, Xiao and Li, Jundong},
  booktitle={Proceedings of the 27th ACM SIGKDD Conference on Knowledge Discovery \& Data Mining},
  pages={426--435},
  year={2021}
}

@inproceedings{zeng2023parrot,
  title={PARROT: Position-Aware Regularized Optimal Transport for Network Alignment},
  author={Zeng, Zhichen and Zhang, Si and Xia, Yinglong and Tong, Hanghang},
  booktitle={Proceedings of the ACM Web Conference 2023},
  pages={372--382},
  year={2023}
}

@article{gtcalign,
  title={GTCAlign: Global Topology Consistency-based Graph Alignment},
  author={Wang, Chenxu and Jiang, Peijing and Zhang, Xiangliang and Wang, Pinghui and Qin, Tao and Guan, Xiaohong},
  journal={IEEE Transactions on Knowledge and Data Engineering},
  year={2023},
  publisher={IEEE}
}

@article{gcn,
  title={Semi-supervised classification with graph convolutional networks},
  author={Kipf, Thomas N and Welling, Max},
  journal={arXiv preprint arXiv:1609.02907},
  year={2016}
}

@inproceedings{peyre2016gromov,
  title={Gromov-wasserstein averaging of kernel and distance matrices},
  author={Peyr{\'e}, Gabriel and Cuturi, Marco and Solomon, Justin},
  booktitle={International conference on machine learning},
  pages={2664--2672},
  year={2016},
  organization={PMLR}
}

@article{memoli2011gromov,
  title={Gromov--Wasserstein distances and the metric approach to object matching},
  author={M{\'e}moli, Facundo},
  journal={Foundations of computational mathematics},
  volume={11},
  pages={417--487},
  year={2011},
  publisher={Springer}
}

@article{kuhn1955hungarian,
  title={The Hungarian method for the assignment problem},
  author={Kuhn, Harold W},
  journal={Naval research logistics quarterly},
  volume={2},
  number={1-2},
  pages={83--97},
  year={1955},
  publisher={Wiley Online Library}
}

@inproceedings{yan2021bright,
  title={Bright: A bridging algorithm for network alignment},
  author={Yan, Yuchen and Zhang, Si and Tong, Hanghang},
  booktitle={Proceedings of the web conference 2021},
  pages={3907--3917},
  year={2021}
}

@article{dhot,
  title={DHOT-GM: Robust Graph Matching Using A Differentiable Hierarchical Optimal Transport Framework},
  author={Cheng, Haoran and Luo, Dixin and Xu, Hongteng},
  journal={arXiv preprint arXiv:2310.12081},
  year={2023}
}

@book{WD,
  title={Optimal transport: old and new},
  author={Villani, C{\'e}dric and others},
  volume={338},
  year={2009},
}

@article{sinkhorn,
  title={Sinkhorn distances: Lightspeed computation of optimal transport},
  author={Cuturi, Marco},
  journal={Advances in neural information processing systems},
  volume={26},
  year={2013}
}

@inproceedings{zhang2015multiple,
  title={Multiple anonymized social networks alignment},
  author={Zhang, Jiawei and Philip, S Yu},
  booktitle={2015 IEEE International Conference on Data Mining},
  pages={599--608},
  year={2015},
  organization={IEEE}
}

@inproceedings{tang2008arnetminer,
  title={Arnetminer: extraction and mining of academic social networks},
  author={Tang, Jie and Zhang, Jing and Yao, Limin and Li, Juanzi and Zhang, Li and Su, Zhong},
  booktitle={Proceedings of the 14th ACM SIGKDD international conference on Knowledge discovery and data mining},
  pages={990--998},
  year={2008}
}

@inproceedings{zhang2021balancing,
  title={Balancing consistency and disparity in network alignment},
  author={Zhang, Si and Tong, Hanghang and Jin, Long and Xia, Yinglong and Guo, Yunsong},
  booktitle={Proceedings of the 27th ACM SIGKDD conference on knowledge discovery \& data mining},
  pages={2212--2222},
  year={2021}
}

@inproceedings{li2019partially,
  title={Partially shared adversarial learning for semi-supervised multi-platform user identity linkage},
  author={Li, Chaozhuo and Wang, Senzhang and Wang, Hao and Liang, Yanbo and Yu, Philip S and Li, Zhoujun and Wang, Wei},
  booktitle={Proceedings of the 28th ACM international conference on information and knowledge management},
  pages={249--258},
  year={2019}
}

@inproceedings{heimann2018regal,
  title={Regal: Representation learning-based graph alignment},
  author={Heimann, Mark and Shen, Haoming and Safavi, Tara and Koutra, Danai},
  booktitle={Proceedings of the 27th ACM international conference on information and knowledge management},
  pages={117--126},
  year={2018}
}

@inproceedings{fusegwd,
  title={Optimal transport for structured data with application on graphs},
  author={Titouan, Vayer and Courty, Nicolas and Tavenard, Romain and Flamary, R{\'e}mi},
  booktitle={International Conference on Machine Learning},
  pages={6275--6284},
  year={2019},
  organization={PMLR}
}

@article{gin,
  title={How powerful are graph neural networks?},
  author={Xu, Keyulu and Hu, Weihua and Leskovec, Jure and Jegelka, Stefanie},
  journal={arXiv preprint arXiv:1810.00826},
  year={2018}
}

@article{tang2023fused,
  title={A fused gromov-wasserstein framework for unsupervised knowledge graph entity alignment},
  author={Tang, Jianheng and Zhao, Kangfei and Li, Jia},
  journal={arXiv preprint arXiv:2305.06574},
  year={2023}
}

@article{qi2021unsupervised,
  title={Unsupervised knowledge graph alignment by probabilistic reasoning and semantic embedding},
  author={Qi, Zhiyuan and Zhang, Ziheng and Chen, Jiaoyan and Chen, Xi and Xiang, Yuejia and Zhang, Ningyu and Zheng, Yefeng},
  journal={arXiv preprint arXiv:2105.05596},
  year={2021}
}

@inproceedings{liu2022selfkg,
  title={Selfkg: Self-supervised entity alignment in knowledge graphs},
  author={Liu, Xiao and Hong, Haoyun and Wang, Xinghao and Chen, Zeyi and Kharlamov, Evgeny and Dong, Yuxiao and Tang, Jie},
  booktitle={Proceedings of the ACM Web Conference 2022},
  pages={860--870},
  year={2022}
}

@inproceedings{zeng2022interactive,
  title={Interactive contrastive learning for self-supervised entity alignment},
  author={Zeng, Kaisheng and Dong, Zhenhao and Hou, Lei and Cao, Yixin and Hu, Minghao and Yu, Jifan and Lv, Xin and Cao, Lei and Wang, Xin and Liu, Haozhuang and others},
  booktitle={Proceedings of the 31st ACM International Conference on Information \& Knowledge Management},
  pages={2465--2475},
  year={2022}
}

@article{mao2021alignment,
  title={From alignment to assignment: Frustratingly simple unsupervised entity alignment},
  author={Mao, Xin and Wang, Wenting and Wu, Yuanbin and Lan, Man},
  journal={arXiv preprint arXiv:2109.02363},
  year={2021}
}

@article{mao2022lightea,
  title={Lightea: A scalable, robust, and interpretable entity alignment framework via three-view label propagation},
  author={Mao, Xin and Wang, Wenting and Wu, Yuanbin and Lan, Man},
  journal={arXiv preprint arXiv:2210.10436},
  year={2022}
}

@inproceedings{wang2018cross,
  title={Cross-lingual knowledge graph alignment via graph convolutional networks},
  author={Wang, Zhichun and Lv, Qingsong and Lan, Xiaohan and Zhang, Yu},
  booktitle={Proceedings of the 2018 conference on empirical methods in natural language processing},
  pages={349--357},
  year={2018}
}

@inproceedings{wang2020knowledge,
  title={Knowledge graph alignment with entity-pair embedding},
  author={Wang, Zhichun and Yang, Jinjian and Ye, Xiaoju},
  booktitle={Proceedings of the 2020 Conference on Empirical Methods in Natural Language Processing (EMNLP)},
  pages={1672--1680},
  year={2020}
}

@inproceedings{sun2018bootstrapping,
  title={Bootstrapping entity alignment with knowledge graph embedding.},
  author={Sun, Zequn and Hu, Wei and Zhang, Qingheng and Qu, Yuzhong},
  booktitle={IJCAI},
  volume={18},
  number={2018},
  year={2018}
}

@article{zeng2021comprehensive,
  title={A comprehensive survey of entity alignment for knowledge graphs},
  author={Zeng, Kaisheng and Li, Chengjiang and Hou, Lei and Li, Juanzi and Feng, Ling},
  journal={AI Open},
  volume={2},
  pages={1--13},
  year={2021},
  publisher={Elsevier}
}

@article{wu2024simplifying,
  title={Simplifying and empowering transformers for large-graph representations},
  author={Wu, Qitian and Zhao, Wentao and Yang, Chenxiao and Zhang, Hengrui and Nie, Fan and Jiang, Haitian and Bian, Yatao and Yan, Junchi},
  journal={Advances in Neural Information Processing Systems},
  volume={36},
  year={2024}
}

@article{wu2023difformer,
  title={Difformer: Scalable (graph) transformers induced by energy constrained diffusion},
  author={Wu, Qitian and Yang, Chenxiao and Zhao, Wentao and He, Yixuan and Wipf, David and Yan, Junchi},
  journal={arXiv preprint arXiv:2301.09474},
  year={2023}
}

@inproceedings{liu2017novel,
  title={Novel geometric approach for global alignment of PPI networks},
  author={Liu, Yangwei and Ding, Hu and Chen, Danyang and Xu, Jinhui},
  booktitle={Proceedings of the AAAI Conference on Artificial Intelligence},
  volume={31},
  number={1},
  year={2017}
}

@article{bolte2014proximal,
  title={Proximal alternating linearized minimization for nonconvex and nonsmooth problems},
  author={Bolte, J{\'e}r{\^o}me and Sabach, Shoham and Teboulle, Marc},
  journal={Mathematical Programming},
  volume={146},
  number={1},
  pages={459--494},
  year={2014},
  publisher={Springer}
}

@article{li2023efficient,
  title={Efficient approximation of Gromov-Wasserstein distance using importance sparsification},
  author={Li, Mengyu and Yu, Jun and Xu, Hongteng and Meng, Cheng},
  journal={Journal of Computational and Graphical Statistics},
  volume={32},
  number={4},
  pages={1512--1523},
  year={2023},
  publisher={Taylor \& Francis}
}

@inproceedings{final,
  title={Final: Fast attributed network alignment},
  author={Zhang, Si and Tong, Hanghang},
  booktitle={Proceedings of the 22nd ACM SIGKDD international conference on knowledge discovery and data mining},
  pages={1345--1354},
  year={2016}
}

@inproceedings{peng2023robust,
  title={Robust network alignment with the combination of structure and attribute embeddings},
  author={Peng, Jingkai and Xiong, Fei and Pan, Shirui and Wang, Liang and Xiong, Xi},
  booktitle={2023 IEEE International Conference on Data Mining (ICDM)},
  pages={498--507},
  year={2023},
  organization={IEEE}
}

@inproceedings{nextalign,
  title={Balancing consistency and disparity in network alignment},
  author={Zhang, Si and Tong, Hanghang and Jin, Long and Xia, Yinglong and Guo, Yunsong},
  booktitle={Proceedings of the 27th ACM SIGKDD conference on knowledge discovery \& data mining},
  pages={2212--2222},
  year={2021}
}

@inproceedings{REGAL,
  title={Regal: Representation learning-based graph alignment},
  author={Heimann, Mark and Shen, Haoming and Safavi, Tara and Koutra, Danai},
  booktitle={Proceedings of the 27th ACM international conference on information and knowledge management},
  pages={117--126},
  year={2018}
}

@inproceedings{jiao2024cina,
  title={CINA: Curvature-Based Integrated Network Alignment with Hypergraph},
  author={Jiao, Pengfei and Liu, Yuanqi and Wang, Yinghui and Zhang, Ge},
  booktitle={2024 IEEE 40th International Conference on Data Engineering (ICDE)},
  pages={2709--2722},
  year={2024},
  organization={IEEE}
}

@article{cuturi2013sinkhorn,
  title={Sinkhorn distances: Lightspeed computation of optimal transport},
  author={Cuturi, Marco},
  journal={Advances in neural information processing systems},
  volume={26},
  year={2013}
}

@article{sinkhorn1967concerning,
  title={Concerning nonnegative matrices and doubly stochastic matrices},
  author={Sinkhorn, Richard and Knopp, Paul},
  journal={Pacific Journal of Mathematics},
  volume={21},
  number={2},
  pages={343--348},
  year={1967},
  publisher={Mathematical Sciences Publishers}
}

@inproceedings{kantorovich1942transfer,
  title={On the transfer of masses (in Russian)},
  author={Kantorovich, L},
  booktitle={Doklady Akademii Nauk},
  volume={37},
  pages={227},
  year={1942}
}

@article{shchur2018pitfalls,
  title={Pitfalls of graph neural network evaluation},
  author={Shchur, Oleksandr and Mumme, Maximilian and Bojchevski, Aleksandar and G{\"u}nnemann, Stephan},
  journal={arXiv preprint arXiv:1811.05868},
  year={2018}
}

@article{hermanns2023grasp,
  title={Grasp: Scalable graph alignment by spectral corresponding functions},
  author={Hermanns, Judith and Skitsas, Konstantinos and Tsitsulin, Anton and Munkhoeva, Marina and Kyster, Alexander and Nielsen, Simon and Bronstein, Alexander M and Mottin, Davide and Karras, Panagiotis},
  journal={ACM Transactions on Knowledge Discovery from Data},
  volume={17},
  number={4},
  pages={1--26},
  year={2023},
  publisher={ACM New York, NY}
}

@inproceedings{jonker1988shortest,
  title={A shortest augmenting path algorithm for dense and sparse linear assignment problems},
  author={Jonker, Roy and Volgenant, Ton},
  booktitle={DGOR/NSOR: Papers of the 16th Annual Meeting of DGOR in Cooperation with NSOR/Vortr{\"a}ge der 16. Jahrestagung der DGOR zusammen mit der NSOR},
  pages={622--622},
  year={1988},
  organization={Springer}
}

@article{xu2019scalable,
  title={Scalable Gromov-Wasserstein learning for graph partitioning and matching},
  author={Xu, Hongteng and Luo, Dixin and Carin, Lawrence},
  journal={Advances in neural information processing systems},
  volume={32},
  year={2019}
}

@inproceedings{over-smoothing,
  title={Deeper insights into graph convolutional networks for semi-supervised learning},
  author={Li, Qimai and Han, Zhichao and Wu, Xiao-Ming},
  booktitle={Proceedings of the AAAI conference on artificial intelligence},
  volume={32},
  number={1},
  year={2018}
}

@article{over-squashing,
  title={On the bottleneck of graph neural networks and its practical implications},
  author={Alon, Uri and Yahav, Eran},
  journal={arXiv preprint arXiv:2006.05205},
  year={2020}
}

@inproceedings{ref3,
  author       = {Mohsen Bayati and
                  Margot Gerritsen and
                  David F. Gleich and
                  Amin Saberi and
                  Ying Wang},
  editor       = {Wei Wang and
                  Hillol Kargupta and
                  Sanjay Ranka and
                  Philip S. Yu and
                  Xindong Wu},
  title        = {Algorithms for Large, Sparse Network Alignment Problems},
  booktitle    = {{ICDM} 2009, The Ninth {IEEE} International Conference on Data Mining,
                  Miami, Florida, USA, 6-9 December 2009},
  pages        = {705--710},
  publisher    = {{IEEE} Computer Society},
  year         = {2009},
  url          = {https://doi.org/10.1109/ICDM.2009.135},
  doi          = {10.1109/ICDM.2009.135},
  bibsource    = {dblp computer science bibliography, https://dblp.org}
}

@article{ref23,
  author       = {Gunnar W. Klau},
  title        = {A new graph-based method for pairwise global network alignment},
  journal      = {{BMC} Bioinform.},
  volume       = {10},
  number       = {{S-1}},
  year         = {2009},
  url          = {https://doi.org/10.1186/1471-2105-10-S1-S59},
  doi          = {10.1186/1471-2105-10-S1-S59},
  bibsource    = {dblp computer science bibliography, https://dblp.org}
}

@inproceedings{ref40,
  author       = {Rohit Singh and
                  Jinbo Xu and
                  Bonnie Berger},
  editor       = {Terence P. Speed and
                  Haiyan Huang},
  title        = {Pairwise Global Alignment of Protein Interaction Networks by Matching
                  Neighborhood Topology},
  booktitle    = {Research in Computational Molecular Biology, 11th Annual International
                  Conference, {RECOMB} 2007, Oakland, CA, USA, April 21-25, 2007, Proceedings},
  series       = {Lecture Notes in Computer Science},
  volume       = {4453},
  pages        = {16--31},
  publisher    = {Springer},
  year         = {2007},
  url          = {https://doi.org/10.1007/978-3-540-71681-5\_2},
  doi          = {10.1007/978-3-540-71681-5\_2},
  bibsource    = {dblp computer science bibliography, https://dblp.org}
}

@article{ref15,
  author       = {Soheil Feizi and
                  Gerald T. Quon and
                  Mariana Recamonde Mendoza and
                  Muriel M{\'{e}}dard and
                  Manolis Kellis and
                  Ali Jadbabaie},
  title        = {Spectral Alignment of Graphs},
  journal      = {{IEEE} Trans. Netw. Sci. Eng.},
  volume       = {7},
  number       = {3},
  pages        = {1182--1197},
  year         = {2020},
  url          = {https://doi.org/10.1109/TNSE.2019.2913233},
  doi          = {10.1109/TNSE.2019.2913233},
  bibsource    = {dblp computer science bibliography, https://dblp.org}
}

@inproceedings{ref22,
  author       = {Paris A. Karakasis and
                  Aritra Konar and
                  Nicholas D. Sidiropoulos},
  editor       = {Feida Zhu and
                  Beng Chin Ooi and
                  Chunyan Miao},
  title        = {Joint Graph Embedding and Alignment with Spectral Pivot},
  booktitle    = {{KDD} '21: The 27th {ACM} {SIGKDD} Conference on Knowledge Discovery
                  and Data Mining, Virtual Event, Singapore, August 14-18, 2021},
  pages        = {851--859},
  publisher    = {{ACM}},
  year         = {2021},
  url          = {https://doi.org/10.1145/3447548.3467377},
  doi          = {10.1145/3447548.3467377},
  bibsource    = {dblp computer science bibliography, https://dblp.org}
}

@article{pagerank,
  title={The anatomy of a large-scale hypertextual web search engine},
  author={Brin, Sergey and Page, Lawrence},
  journal={Computer networks and ISDN systems},
  volume={30},
  number={1-7},
  pages={107--117},
  year={1998},
  publisher={Elsevier}
}

@techreport{page1999pagerank,
  title={The PageRank citation ranking: Bringing order to the web.},
  author={Page, Lawrence and Brin, Sergey and Motwani, Rajeev and Winograd, Terry},
  year={1999},
  institution={Stanford infolab}
}

@inproceedings{PPRGo,
  author       = {Aleksandar Bojchevski and
                  Johannes Klicpera and
                  Bryan Perozzi and
                  Amol Kapoor and
                  Martin Blais and
                  Benedek R{\'{o}}zemberczki and
                  Michal Lukasik and
                  Stephan G{\"{u}}nnemann},
  editor       = {Rajesh Gupta and
                  Yan Liu and
                  Jiliang Tang and
                  B. Aditya Prakash},
  title        = {Scaling Graph Neural Networks with Approximate PageRank},
  booktitle    = {{KDD} '20: The 26th {ACM} {SIGKDD} Conference on Knowledge Discovery
                  and Data Mining, Virtual Event, CA, USA, August 23-27, 2020},
  pages        = {2464--2473},
  publisher    = {{ACM}},
  year         = {2020},
  url          = {https://doi.org/10.1145/3394486.3403296},
  doi          = {10.1145/3394486.3403296},
  bibsource    = {dblp computer science bibliography, https://dblp.org}
}

@article{chen2025enhancing,
  title={Enhancing robust semi-supervised graph alignment via adaptive optimal transport},
  author={Chen, Songyang and Lin, Youfang and Liu, Yu and Ouyang, Yuwei and Guo, Zongshen and Zou, Lei},
  journal={World Wide Web},
  volume={28},
  number={2},
  pages={22},
  year={2025},
  publisher={Springer}
}

@article{chen2025combalign,
  title={CombAlign: Enhancing Model Expressiveness in Unsupervised Graph Alignment},
  author={Chen, Songyang and Liu, Yu and Zou, Lei and Wang, Zexuan and Lin, Youfang},
  journal={IEEE Transactions on Knowledge and Data Engineering},
  volume={38},
  number={2},
  pages={956--968},
  year={2025},
  publisher={IEEE}
}

@article{chen2025leveraging,
  title={Leveraging Attribute Interaction and Self-Training for Graph Alignment via Optimal Transport},
  author={Chen, Songyang and Lin, Youfang and Zeng, Ziyuan and Xue, Mengyang},
  journal={Mathematics},
  volume={13},
  number={12},
  pages={1971},
  year={2025},
  publisher={MDPI}
}

\end{document}